\DeclareMathOperator*{\argmin}{arg\,min}
\DeclareMathOperator*{\argmax}{arg\,max}
\newcommand{\ve}[2]{\left\langle #1 ,  #2 \right\rangle}    
\newcommand{\R}{\mathbb{R}}                      
\newcommand{\Exp}{\mathbb{E}}                      
\newcommand{\zv}{ {\bf z}}
\newcommand{\uv}{ {\bf u}}
\newcommand{\vv}{ {\bf v}}
\newcommand{\wv}{ {\bf w}}
\newcommand{\sv}{ {\bf s}}
\newcommand{\alphav}{ {\boldsymbol \alpha}}
\newcommand{\av}{ {\bf a}}
\newcommand{\bv}{ {\bf b}}
\newcommand{\ev}{ {\bf e}}
\newcommand{\0}{ {\bf 0}}
\newcommand{\vsubset}[2]{#1_{[#2]}}
\newcommand{\vc}[2]{#1^{(#2)}}                   
\DeclareMathOperator{\gap}{gap}           
\newcommand{\cA}{\mathcal{A}}
\newcommand{\cB}{\mathcal{B}}
\newcommand{\bP}{\mathcal{P}}
\newcommand{\bD}{\mathcal{O}}
\newcommand{\bO}{\mathcal{O}}
\newcommand{\DaP}{\Delta \vsubset{\alphav}{\bP}}
\newcommand{\Ej}[2]{\Exp_j\big[#1\,|\,\alphav^{(#2)}\big]}         
\newcommand{\EP}[2]{\Exp_{\bP}\big[#1\,|\,\alphav^{(#2)}\big]}         
\newcommand{\A}{$\cA$\xspace} 
\newcommand{\B}{$\cB$\xspace} 
\newtheorem*{rep@theorem}{\rep@title}
\newcommand{\newreptheorem}[2]{%
\newenvironment{rep#1}[1]{%
 \def\rep@title{#2 \ref{##1}}%
 \begin{rep@theorem}}%
 {\end{rep@theorem}}}
\theoremstyle{plain}
\newtheorem{theorem}{Theorem}
\newtheorem{lemma}[theorem]{Lemma}
\newtheorem{proposition}[theorem]{Proposition}
\newtheorem{assumption}{Assumption}
\newtheorem{remark}{Remark}
\theoremstyle{definition}
\newtheorem{definition}{Definition}
\renewcommand{\paragraph}{%
  \@startsection{paragraph}{4}%
  {\z@}{0.4ex \@plus 0.3ex \@minus 0.2ex}{-1em}%
  {\normalfont\normalsize\bfseries}%
}
\title{Efficient Use of Limited-Memory Accelerators \\ for Linear Learning on Heterogeneous Systems}
\author{
Celestine D{\"u}nner\\
IBM Research - Zurich\\
Switzerland\\
\texttt{cdu@zurich.ibm.com}\\
\And
Thomas Parnell\\
IBM Research - Zurich\\
Switzerland\\
\texttt{tpa@zurich.ibm.com}\\
\And
Martin Jaggi\\
EPFL\\
Switzerland\\
\texttt{martin.jaggi@epfl.ch}\\
}
\begin{document}

\maketitle

\setlength{\abovedisplayskip}{6pt}
\setlength{\belowdisplayskip}{6pt}

\begin{abstract}
We propose a generic algorithmic building block to accelerate training of  machine learning models on heterogeneous compute systems.
Our scheme allows to efficiently employ compute accelerators such as GPUs and FPGAs for the training of large-scale machine learning models, when the training data exceeds their memory capacity. Also, it provides adaptivity to any system's memory hierarchy in terms of size and processing speed.
Our technique is built upon novel theoretical insights regarding primal-dual coordinate methods, and uses duality gap information to dynamically decide which part of the data should be made available for fast processing. To illustrate the power of our approach we demonstrate its performance for training of generalized linear models on a large-scale dataset exceeding the memory size of a modern GPU, showing an order-of-magnitude speedup over existing approaches.
\end{abstract}

\section{Introduction}
\label{sec:intro}
\vspace{-0.2cm}

As modern compute systems rapidly increase in size, complexity and computational power, they become less homogeneous. Today's systems exhibit strong heterogeneity at many levels: in terms of compute parallelism, memory size and access bandwidth, as well as communication bandwidth between compute nodes (e.g., computers, mobile phones, server racks, GPUs, FPGAs, storage nodes etc.). This increasing heterogeneity of  compute environments is posing new challenges for the development of efficient distributed algorithms. That is to optimally exploit individual compute resources with very diverse characteristics without suffering from the I/O cost of exchanging data between them.

\begin{wrapfigure}{r}{0.35\textwidth}
\centering
 \vspace{-4mm}
  \includegraphics[width=0.35\textwidth]{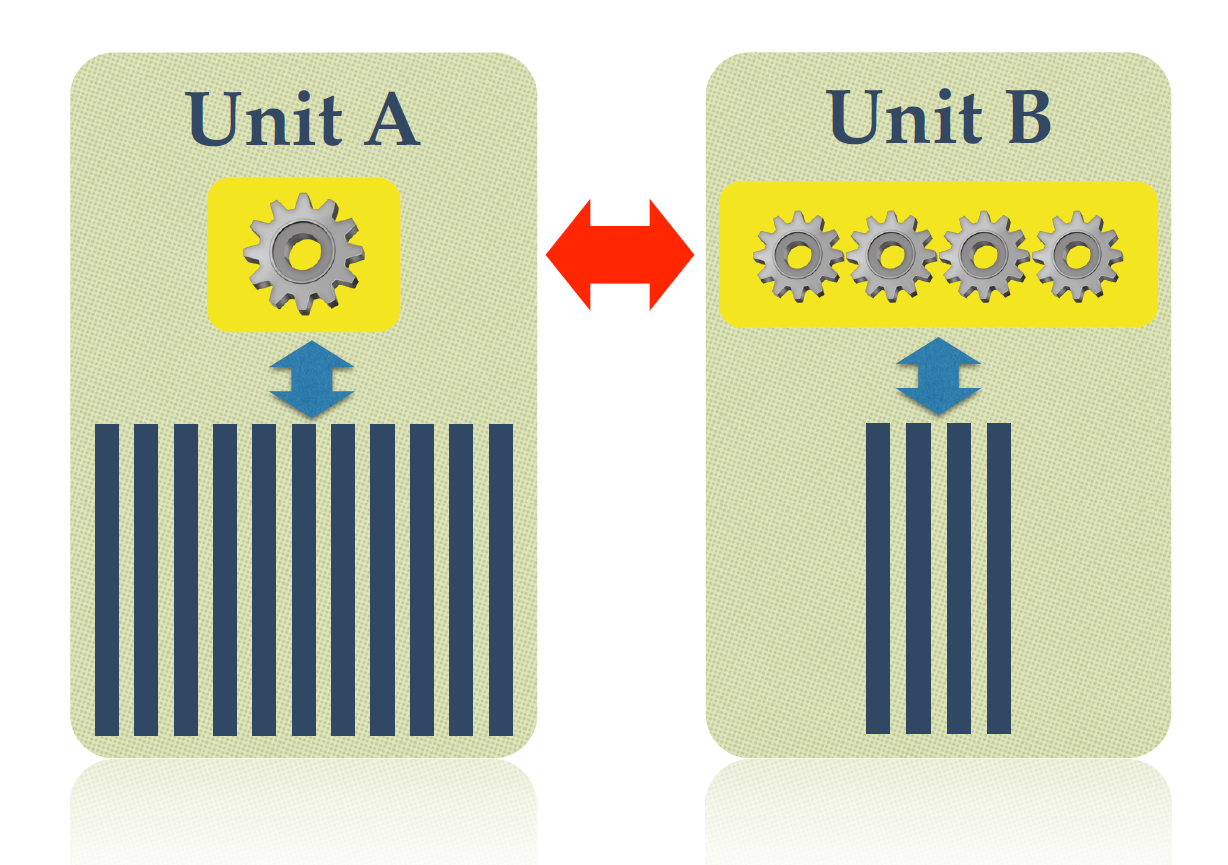}
  \caption{Compute units \A, \B with \\different memory size, bandwidth\\ and compute power.}
\vspace{-2mm}
\label{fig:setup} 
\end{wrapfigure}

In this paper, we focus on the task of training large-scale machine learning models in such heterogeneous compute environments and propose a new generic algorithmic building block to efficiently distribute the workload between heterogeneous compute units.
Assume two compute units, denoted  \A and \B, which differ in compute power as well as memory capacity as illustrated in Figure~\ref{fig:setup}. 
The computational power of unit~\A is smaller and its memory capacity is larger relative to its peer unit \B (i.e., we assume that the training data fits into the memory of \A, but not into \B's). 
Hence, on the computationally more powerful unit \B, only part of the data can be processed at any given time. 
The two units,~\A and \B, are able to communicate with each other over some interface, however there is cost associated with doing so.


This generic setup covers many essential elements of modern machine learning systems. A typical example is that of accelerator units, such as a GPUs or FPGAs, augmenting traditional computers or servers. While such devices can offer a significant increase in computational power due to their massively parallel architectures, their memory capacity is typically very limited. Another example can be found in hierarchical memory systems where data in the higher level memory can be accessed and hence processed faster than data in the -- typically larger -- lower level memory. Such memory systems are spanning from, e.g., fast on-chip caches on one extreme to slower hard drives on the other extreme.

The core question we address in this paper is the following:\textit{ How can we efficiently distribute the workload between heterogeneous units \A and \B in order to accelerate large scale learning? }

The generic algorithmic building block we propose systematically splits the overall problem into two workloads, a more data-intensive but less compute-intensive part for unit \A and a more compute-intensive but less data-intensive part for \B. These workloads are then executed in parallel, enabling full utilization of both resources while keeping the amount of necessary communication between the two units minimal.
Such a generic algorithmic building block is useful much more widely than just for training on two heterogeneous compute units -- it can serve as a component of larger training algorithms or pipelines thereof. 
In a distributed training setting, our scheme allows each individual node to locally benefit from its own accelerator, therefore speeding up the overall task on a cluster, e.g., as part of \cite{Smith:2016wp} or another distributed algorithm. Orthogonal to such a horizontal application, our scheme can also be used as a building block vertically integrated  in a system, serving the efficiency of several levels of the memory hierarchy of a given compute node.

\vspace{-0.5mm}
\paragraph{Related Work.}
The most popular existing approach to deal with memory limitations is to process data in batches. For example, for the special case of SVMs, \cite{Yu:2012fp} splits data samples into blocks which are then loaded and processed sequentially (on \B), in the setting of limited RAM and the full data residing on disk. This approach enables contiguous chunks of data to be loaded which is beneficial in terms of I/O overhead; it however treats samples uniformly.
Later,  in \cite{Chang:2011hr,Matsushima:2012vg} it is proposed to selectively load and keep informative samples in memory in order to reduce disk access, but this approach is specific to support vectors and is unable to theoretically quantify the possible speedup.
 
In this work, we propose a novel, theoretically-justified scheme to efficiently deal with memory limitations in the heterogeneous two-unit setting illustrated in Figure \ref{fig:setup}. Our scheme can be applied to a broad class of machine learning problems, including generalized linear models, empirical risk minimization problems with a strongly convex regularizer, such as SVM, as well as sparse models, such as Lasso. 
In contrast to the related line of research \cite{Yu:2012fp,Chang:2011hr,Matsushima:2012vg}, our scheme is designed to take full advantage of both compute resources \A and \B for training by systematically splitting the workload among \A and \B in order to adapt to their specific properties and to the available bandwidth between them. 
At the heart of our approach lies a smart data selection scheme using coordinate-wise duality gaps as selection criteria. 
Our theory will show that our selection scheme 
provably improves the convergence rate of training overall, by explicitly quantifying the benefit over uniform sampling. In contrast, existing work \cite{Chang:2011hr,Matsushima:2012vg} only showed that the linear convergence rate on SVMs is preserved asymptotically, but not necessarily improved.

A different line of related research is steepest coordinate selection. It is known that steepest coordinate descent can converge much faster than uniform \cite{Nutini:2015vd} for single coordinate updates on smooth objectives, however it typically does not perform well for general convex problems, such as those with $L1$ regularization. 
In our work, we overcome this issue by using the generalized primal-dual gaps~\cite{duenner16} which do extend to $L1$ problems.
Related to this notion, \cite{Csiba:2015ue,Osokin:2016:MGB,Perekrestenko:2017tg} 
have explored the use of similar information as an adaptive measure of importance, in order to adapt the sampling probabilities of coordinate descent.
Both, this line of research, as well as steepest coordinate descent \cite{Nutini:2015vd} are still limited to single coordinate updates, and cannot  be readily extended to arbitrary accuracy updates on a larger subset of coordinates (performed per communication round) as required in our heterogeneous setting.


\paragraph{Contributions.}
The main contributions of this work are summarized as follows:
\vspace{-1mm}
\begin{itemize}[leftmargin=0.6cm]
\setlength\itemsep{0.1cm}
 \item  We analyze the per-iteration-improvement of primal-dual block coordinate descent and how it depends on the selection of the active coordinate block at that iteration. We extend the convergence theory to arbitrary approximate updates on the coordinate subsets, and propose a novel dynamic selection scheme for blocks of coordinates, which relies on coordinate-wise duality gaps, and we precisely quantify the speedup of the convergence rate over uniform sampling.
 \item Our theoretical findings result in a scheme for learning in heterogeneous compute environments which is easy to use, theoretically justified and versatile in that it can be adapted to given resource constraints, such as memory, computation and communication.  Furthermore, our scheme  enables parallel execution between, and also within, two heterogeneous compute units.
 \item For the example of joint training in a CPU plus GPU environment -- which is very challenging for data-intensive work loads  -- we demonstrate a more than $10\times$ speed-up over existing methods for limited-memory training.
\end{itemize}

\vspace{-2mm}
\section{Learning Problem}
\vspace{-2mm}
For the scope of this work we focus on the training of convex generalized linear models of the form 
\begin{equation}
\label{eq:A}
    \min_{\alphav \in \R^{n}} \quad 
    \bO(\alphav) := f(A\alphav )
    \ +\ g(\alphav) 
\end{equation}
where $f$ is a smooth function and $g(\alphav) = \sum_i g_i(\alpha_i)$ is separable, $\alphav\in \R^n$ describes  the parameter vector and  $A = [\av_1,\av_2,\dots,\av_n]\in\R^{d\times n}$  the data matrix with column vectors $\av_i\in \R^d$. 
This setting covers many prominent machine learning problems, including generalized linear models as used for regression, classification and feature selection. 
To avoid confusion, it is important to distinguish the two main application classes: On one hand, we cover empirical risk minimization (ERM) problems with a strongly convex regularizer such as $L_2$-regularized SVM -- where $\alphav$ then is the dual variable vector and $f$ is the smooth regularizer conjugate, as in SDCA \cite{sdca}. On the other hand, we also cover the class of sparse models such as Lasso or ERM with a sparse regularizer -- where $f$ is the data-fit term and~$g$ takes the role of the non-smooth regularizer, so~$\alphav$ are the original primal parameters.


\paragraph{Duality Gap.} Through the perspective of Fenchel-Rockafellar duality, one can, for any primal-dual solution pair $(\alphav, \wv)$, define the non-negative duality gap  for \eqref{eq:A}  as
\begin{eqnarray}
\gap(\alphav;\wv)&:=& f(A\alphav)+g(\alphav)+f^* (\wv)+ g^*(-A^\top\wv)  \label{eq:gap}
\end{eqnarray}
where the functions $f^*$, $g^*$ in \eqref{eq:gap} are defined as the \textit{convex conjugate}\footnote{For $h:\R^d\rightarrow \R$ the convex conjugate is defined as $h^*(\vv) := \sup_{\uv\in\R^d} \vv^\top \uv - h(\uv)$.}
 of their corresponding counterparts $f, g$ \cite
 {Bauschke:2011ik}.
Let us consider parameters $\wv$ that are optimal relative to a given $\alphav$, i.e.,
\vspace{-0.2cm}
\begin{equation}
\wv:=\wv(\alphav)=\nabla f(A\alphav),
\label{eq:primaldualmapping}
\end{equation}
which implies $f(A\alphav)+f^*(\wv) = \langle A\alphav, \wv\rangle$. In this special case, the duality gap \eqref{eq:gap} simplifies and becomes separable over the columns $\av_i$ of~$A$ and the corresponding parameter weights $\alpha_i$ given $\wv$. We will later exploit this property to quantify the suboptimality of individual coordinates.
\begin{equation}
\gap(\alphav) = \sum_{i \in [n]} \gap_i(\alpha_i),
~~\text{ where } ~~
\label{eq:gapi}
\gap_i(\alpha_i) := \wv^\top \av_i \alpha_i +g_i(\alpha_i)+ g_i^*(-\av_i^\top\wv).
\end{equation}
\vspace{-3mm}


\paragraph{Notation.} For the remainder of the paper we use $\vv_{[\bP]}$ to denote a vector $\vv$ with non-zero entries only for the coordinates $i\in \bP \subseteq [n] = \{1,\dots, n\}$. Similarly we write $A_{[\bP]}$ to denote the matrix $A$ composing only of columns indexed by $i\in\bP$.

\vspace{-1mm}
\section{Approximate Block Coordinate Descent} 
\label{sec:blockCD}
\vspace{-2mm}
The theory we present in this section serves to derive a theoretical framework for our heterogeneous learning scheme that will be presented in Section \ref{sec:heter}. Therefore, let us consider the generic block minimization scheme described in Algorithm \ref{alg:SCDb} to train  generalized linear models of the form \eqref{eq:A}. 

\vspace{-1mm}
\subsection{Algorithm Description}
\vspace{-1mm}
In every round $t$, of Algorithm \ref{alg:SCDb}, a block~$\bP$ of $m$ coordinates of~$\alphav$ 
is selected according to an arbitrary selection rule. Then, an update is computed on this block of coordinates
by optimizing
\begin{equation}
\label{eq:subproblem1}
    \argmin_{\DaP\in\R^n} \quad 
    \bO(\alphav+\DaP)
\end{equation}
 where an arbitrary solver can be used to find this update. This update is not necessarily perfectly optimal but of a relative accuracy $\theta$, in the following sense of approximation quality:

\begin{minipage}[t]{0.49\textwidth}
    \vspace{-0.5cm}
\begin{algorithm}[H]
   \caption{Approximate Block CD}
   \label{alg:SCDb}
\begin{algorithmic}[1]
   \STATE Initialize $\alphav^{(0)} := \0$
    \FOR{$t=0,1,2,...$ }
   \STATE select a subset $\bP$ with $|\bP|=m$
   \STATE $\Delta \alphav_{[\bP]} \leftarrow$ 
   $\theta$-approx. solution to \eqref{eq:subproblem1} 
   \STATE $\alphav^{(t+1)} := \alphav^{(t)}+ \Delta\alphav_{[\bP]}$
   \ENDFOR
\end{algorithmic}
\end{algorithm}
\end{minipage}
\hspace{2mm}
\begin{minipage}[t]{0.49\textwidth}
    \vspace{-0.5cm}
\begin{algorithm}[H]
   \caption{\textsc{DuHL}}
   \label{alg:myscheme}
\begin{algorithmic}[1]
   \STATE Initialize $\alphav^{(0)} := \0$, $\zv :=\0$
    \STATE{\textbf{for} $t=0,1,2,...$ }
    	\STATE \hspace{0.4cm}determine $\bP$ according to \eqref{eq:zsel}
	\STATE \hspace{0.4cm}refresh memory \B to contain $A_{[\bP]}$.
    \STATE \hspace{0.4cm}\textbf{on \B do:}
   \STATE \hspace{0.8cm}$\Delta \alphav_{[\bP]} \leftarrow$ 
   $\theta$-approx. solution to~\eqref{eq:subproblem}
   \STATE \hspace{0.4cm}\textbf{in parallel on \A do:}
   \STATE \hspace{0.8cm}{\textbf{while} \B not finished }
   \STATE \hspace{1.2cm} sample $j\in [n]$\vspace{-1mm} 
     \STATE \hspace{1.2cm} update $z_j := \gap_j(\alpha^{(t)}_j)$ \vspace{-2pt}  
   \STATE \hspace{0.4cm}$\alphav^{(t+1)} := \alphav^{(t)}+ \Delta\alphav_{[\bP]}$
\end{algorithmic}
\end{algorithm}
\end{minipage}
\vspace{0.3cm}

\begin{definition}[$\theta$-Approximate Update]  The block update $\Delta \alphav_{[\bP]}$ is \emph{$\theta$-approximate} iff
\begin{equation}\label{eq:theta}
\exists  \theta \in[0,1]: \;\; 
\bO(\alphav+\Delta\alphav_{[\bP]}) 
\leq \theta \bO(\alphav+\Delta{\alphav^\star_{[\bP]}}) +  (1-\theta)\bO(\alphav)\vspace{-1mm}
\end{equation}
where
$
\Delta {\alphav^\star_{[\bP]}} \in \argmin_{{\Delta \alphav_{[\bP]}} \in \R^{n}}\bO(\alphav+{\Delta \alphav_{[\bP]}}) .
$
\end{definition}

\subsection{Convergence Analysis} 
\label{sec:blockCDconvergence}
In order to derive a precise convergence rate for Algorithm \ref{alg:SCDb} we build on the convergence analysis of \cite{duenner16,sdca}. We extend their analysis of stochastic coordinate descent in two ways: 1) to a block coordinate scheme with approximate coordinate updates, and 2) to explicitly cover the importance of each selected coordinate, as opposed to uniform sampling.

We define
\begin{equation}
 \rho_{t,\bP}:=\frac {\tfrac 1 m \sum_{j\in \bP} \gap_j(\alpha_j^{(t)})}{\tfrac 1 n \sum_{j\in [n]} \gap_j(\alpha_j^{(t)})}
\label{eq:rhoP}
\end{equation}

which quantifies how much the coordinates $i\in\bP$ of $\alphav^{(t)}$ contribute to the global duality gap \eqref{eq:gap}. Thus, giving a measure of suboptimality for these coordinates. In  Algorithm \ref{alg:SCDb} an arbitrary selection scheme (deterministic or randomized) can be applied and our theory will explain how the convergence of Algorithm \ref{alg:SCDb} depends on the selection through the distribution of $\rho_{t,\bP}$. That is, for strongly convex functions $g_i$, we found that the per-step improvement in suboptimality is proportional to $\rho_{t,\bP}$ of the specific coordinate block $\bP$ being selected at that iteration $t$:
\begin{equation}
\epsilon^{(t+1)}\leq \left(1-   \rho_{t,\bP}\theta c \right)\epsilon^{(t)}
\label{eq:recursion}
\end{equation}
where $\epsilon^{(t)}:= \bO(\alphav^{(t)})- \bO(\alphav^\star)$ measures the suboptimality of $\alphav^{(t)}$  and $c>0$ is a constant which will be specified in the following theorem.
A similar dependency on $\rho_{t,\bP}$ can also be shown for non-strongly convex functions $g_i$, leading to our two main convergence results for Algorithm \ref{alg:SCDb}:
\vspace{0.1cm}
\begin{theorem}
\label{thm:blockSCDstronglyconvex}
For Algorithm \ref{alg:SCDb} running on \eqref{eq:A} where $f$ is $L$-smooth and  $g_i$ is $\mu$-strongly  convex with $\mu>0$ for all $i\in[n]$, it holds that\vspace{-1mm}
\begin{equation}
\Exp_{\bP}[\epsilon^{(t)}\,|\,\alphav^{(0)}]\leq \left(1-\eta_\bP\frac m n \frac{\mu}{ \sigma L + \mu}   \right)^t\epsilon^{(0)}
\end{equation}
where $\sigma:=\|A_{[\bP]}\|_{op}^2$ and $\eta_{\bP}:=\min_t \theta \  \Exp_\bP[\rho_{t,\bP}\,|\,\alphav^{(t)}]$. Expectations are over the choice of ${\bP}$.
\end{theorem}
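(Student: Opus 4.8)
The proof has two layers: first, establish a one-step recursion of the form~\eqref{eq:recursion} conditioned on the block $\bP$ chosen at iteration $t$; second, take expectations over $\bP$ and chain the inequalities across $t$. For the first layer, I would start from a $\theta$-approximate update and compare $\bO(\alphav^{(t)}+\Dak)$ against $\bO(\alphav^{(t)} + s(\alphav^\star_{[\bP]} - \alphav^{(t)})_{[\bP]})$ for a free scalar $s\in[0,1]$ plugged into the optimum in the definition of $\theta$-approximation; the point of introducing $s$ is that it lets us interpolate toward the true global optimum restricted to the coordinates of $\bP$. Expanding $f(A\alphav + s A(\cdot))$ via $L$-smoothness of $f$ gives a quadratic-in-$s$ upper bound whose linear term involves $\langle \wv, A_{[\bP]}(\alphav^\star - \alphav^{(t)})\rangle$ and whose quadratic term contributes the operator-norm factor $\sigma = \|A_{[\bP]}\|_{op}^2$. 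Using $\mu$-strong convexity of each $g_i$ on the coordinates in $\bP$ turns the relevant linear term into (a multiple of) the block-restricted duality gap $\sum_{j\in\bP}\gap_j(\alpha_j^{(t)})$, which by definition of $\rho_{t,\bP}$ in~\eqref{eq:rhoP} equals $\tfrac{m}{n}\rho_{t,\bP}\cdot\gap(\alphav^{(t)})$; and the global gap dominates the suboptimality, $\gap(\alphav^{(t)})\ge\epsilon^{(t)}$. Optimizing over $s$ (the optimal choice will be $s = \tfrac{\mu}{\sigma L + \mu}$ or similar) then yields $\epsilon^{(t+1)} \le (1 - \rho_{t,\bP}\theta\,\tfrac{\mu}{\sigma L+\mu})\epsilon^{(t)}$, i.e.\ \eqref{eq:recursion} with $c = \tfrac{\mu}{\sigma L + \mu}$.

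For the second layer, I would condition on $\alphav^{(t)}$, take $\Exp_\bP$ of the one-step bound, and use that $\epsilon^{(t)}$ is $\bP$-measurable given $\alphav^{(t)}$ to get $\Exp_\bP[\epsilon^{(t+1)}\mid\alphav^{(t)}] \le (1 - \Exp_\bP[\rho_{t,\bP}\mid\alphav^{(t)}]\,\theta c)\epsilon^{(t)} \le (1 - \tfrac{m}{n}\eta_\bP \tfrac{\mu}{\sigma L+\mu})\epsilon^{(t)}$, where the last step absorbs the worst case over $t$ into the definition $\eta_\bP = \min_t \theta\,\Exp_\bP[\rho_{t,\bP}\mid\alphav^{(t)}]$ and accounts for the $\tfrac{m}{n}$ that I folded into $\rho$ above. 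Iterating the tower property over $t=0,\dots,t-1$ and taking the total expectation gives the claimed geometric rate $(1 - \eta_\bP\tfrac{m}{n}\tfrac{\mu}{\sigma L+\mu})^t\epsilon^{(0)}$.

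\textbf{Main obstacle.} The delicate point is handling the \emph{approximate} update correctly: the inequality~\eqref{eq:theta} only compares $\bO(\alphav+\Dak)$ to the \emph{exact block optimum} $\bO(\alphav+\Delta\alphav^\star_{[\bP]})$, not to the interpolated point $\bO(\alphav + s(\cdot))$. So I must first argue $\bO(\alphav+\Delta\alphav^\star_{[\bP]}) \le \bO(\alphav + s(\alphav^\star - \alphav^{(t)})_{[\bP]})$ — which is immediate since $\Delta\alphav^\star_{[\bP]}$ minimizes over all block-supported directions and the interpolant is one such direction — and only then chain through the $\theta$-approximation. Keeping track of which quantities are restricted to $\bP$ versus global, and ensuring the strong-convexity slack is applied to exactly the coordinates in $\bP$ (so that the block gap $\sum_{j\in\bP}\gap_j$ appears, matching the numerator of $\rho_{t,\bP}$), is where the bookkeeping must be done carefully; the rest is a standard smoothness-plus-strong-convexity quadratic optimization in $s$ of the type appearing in \cite{duenner16,sdca}.
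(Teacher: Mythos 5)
Your two-layer architecture (a one-step bound conditioned on $\bP$, then tower property over $t$), the final choice $s=\tfrac{\mu}{\sigma L+\mu}$ to kill the quadratic term, and the use of $\gap(\alphav^{(t)})\ge\epsilon^{(t)}$ all match the paper. But there is a genuine gap at the heart of the first layer: you take the trial direction to be the interpolation toward the (block-restricted) optimum, $s(\alphav^\star-\alphav^{(t)})_{[\bP]}$, and claim that strong convexity of the $g_i$ turns the resulting linear term into the block duality gap $\sum_{j\in\bP}\gap_j(\alpha_j^{(t)})$. It does not. With your direction, the linear-in-$s$ term is $\sum_{i\in\bP}\bigl[\wv^\top\av_i(\alpha^\star_i-\alpha_i^{(t)})+g_i(\alpha^\star_i)-g_i(\alpha_i^{(t)})\bigr]$, and the Fenchel--Young inequality $g_i(\alpha^\star_i)\ge-\alpha^\star_i\av_i^\top\wv-g_i^*(-\av_i^\top\wv)$ only bounds this quantity from \emph{below} by $-\sum_{i\in\bP}\gap_i(\alpha_i^{(t)})$ --- the wrong direction for lower-bounding the per-step decrease by the block gap. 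The conjugate term $g_i^*(-\av_i^\top\wv)$, which is what makes $\gap_i$ appear at all, never enters your bound with equality.

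The paper's key lemma (Lemma~\ref{lemma:basic}) instead takes the trial direction $s(\uv^{(t)}-\alphav^{(t)})$ with $u_i^{(t)}\in\partial g_i^*(-\av_i^\top\wv(\alphav^{(t)}))$ on the coordinates of $\bP$. At precisely this point Fenchel--Young holds with \emph{equality}, $g_i(u_i^{(t)})=-u_i^{(t)}\av_i^\top\wv-g_i^*(-\av_i^\top\wv)$, so the linear term becomes exactly $-s\sum_{i\in\bP}\gap_i(\alpha_i^{(t)})$ and the $\rho_{t,\bP}$ factor follows from its definition; strong convexity then only supplies the $\tfrac{\mu}{2}s(1-s)\|\uv^{(t)}-\alphav^{(t)}\|^2$ slack that cancels the smoothness term $\tfrac{Ls^2}{2}\|A(\uv^{(t)}-\alphav^{(t)})\|^2$ at $s=\tfrac{\mu}{\sigma L+\mu}$. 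Your handling of the $\theta$-approximation (passing from the exact block minimizer to an arbitrary feasible block direction) is fine and is exactly what the paper does in \eqref{eq:pr1}; the missing idea is solely the choice of comparison point $\uv^{(t)}$, without which the duality-gap-weighted rate cannot be obtained.
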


That is, for strongly convex $g_i$, Algorithm \ref{alg:SCDb} has a linear convergence rate.  This was shown before in \cite{sdca, duenner16} for the special case of exact coordinate updates. In strong contrast to earlier coordinate descent analyses 
which build on random uniform sampling, our theory explicitly quantifies the impact of the sampling scheme on the convergence through $\rho_{t,\bP}$. This allows one to benefit from smart selection  and provably 
improve the convergence rate by taking advantage of the inhomogeneity of the duality gaps. The same holds for non-strongly convex functions $g_i$:
\newpage
\begin{theorem}
\label{thm:blockSCDlipschitz}
For Algorithm \ref{alg:SCDb} running on \eqref{eq:A} where $f$ is $L$-smooth and  $g_i$ has $B$-bounded support for all $i\in[n]$, it holds that\vspace{-3mm}
\begin{equation} 
\Exp_{\bP}[\epsilon^{(t)}\,|\,\alphav^{(0)}]  \leq \frac 1 {\eta_{\bP} m} \ \frac{2 \gamma n^2}{2n +t-t_0}\vspace{-1mm}
\end{equation}
with $\gamma:={2 L B^2}\sigma$ where $\sigma:=\|A_{[\bP]}\|_{op}^2$ and $t\geq t_0=\max\big\{0,\tfrac n m  \log\big(\frac{ 2\eta m \epsilon^{(0)}}{n \gamma }\big)\big\}$\vspace{-1mm} where $\eta_{\bP}:=\min_t \theta \  \Exp_\bP[\rho_{t,\bP}\,|\,\alphav^{(t)}]$. Expectations are over the choice of ${\bP}$.
\end{theorem}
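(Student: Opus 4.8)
The argument extends the primal--dual coordinate descent analysis of \cite{sdca,duenner16} in three respects simultaneously: from single coordinates to a block $\bP$ of size $m$, from exact to $\theta$-approximate inner solves, and from uniform to arbitrary selection of $\bP$, carrying the ratio $\rho_{t,\bP}$ of \eqref{eq:rhoP} through every estimate. Throughout, at an iterate $\alphav^{(t)}$ I associate the optimal dual parameters $\wv^{(t)} := \nabla f(A\alphav^{(t)})$ as in \eqref{eq:primaldualmapping}, so that the duality gap is separable as in \eqref{eq:gapi}; weak Fenchel--Rockafellar duality then gives $\gap(\alphav^{(t)}) \ge \epsilon^{(t)}$, which is used only at the very end.

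\emph{Step 1 (a block-restricted competitor).} To upper bound $\bO(\alphav^{(t)} + \Delta\alphav^{\star}_{[\bP]})$ --- hence, by \eqref{eq:theta}, also $\bO(\alphav^{(t)} + \Delta\alphav_{[\bP]})$ --- I substitute into \eqref{eq:subproblem1} the admissible, generally suboptimal direction $s\,(\uv - \alphav^{(t)})_{[\bP]}$ with $s \in [0,1]$, where each coordinate $u_i$ is chosen to maximize $u \mapsto -\av_i^\top \wv^{(t)} u - g_i(u)$, so that Fenchel--Young holds with equality, $g_i^*(-\av_i^\top\wv^{(t)}) = -\av_i^\top\wv^{(t)} u_i - g_i(u_i)$. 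Expanding $f\big(A\alphav^{(t)} + s A_{[\bP]}(\uv-\alphav^{(t)})_{[\bP]}\big)$ via $L$-smoothness of $f$ and bounding the $g$-part by plain convexity, $g_i\big((1-s)\alpha_i^{(t)} + s u_i\big) \le (1-s) g_i(\alpha_i^{(t)}) + s\, g_i(u_i)$, the coefficient of $s$ collapses exactly to $-\sum_{j\in\bP}\gap_j(\alpha_j^{(t)})$, while the second-order term is at most $\tfrac{L}{2}\sigma\,\|(\uv-\alphav^{(t)})_{[\bP]}\|^2\, s^2$ with $\sigma = \|A_{[\bP]}\|_{op}^2$. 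The non-strongly-convex case now uses the $B$-bounded support of the $g_i$: both $\alpha_j^{(t)}$ and $u_j$ lie in $[-B,B]$, whence $\|(\uv-\alphav^{(t)})_{[\bP]}\|^2 \le 4B^2 m$. Combining with \eqref{eq:theta} and $\theta\le 1$ yields the one-step inequality
\[
\epsilon^{(t+1)} \;\le\; \epsilon^{(t)} \;-\; s\,\theta\sum_{j\in\bP}\gap_j(\alpha_j^{(t)}) \;+\; \gamma\, m\, s^2 , \qquad \gamma = 2 L B^2 \sigma .
\]

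\emph{Step 2 (selection average and the scalar recursion).} By the definition \eqref{eq:rhoP} of $\rho_{t,\bP}$ and the separability \eqref{eq:gapi}, $\sum_{j\in\bP}\gap_j(\alpha_j^{(t)}) = \tfrac{m}{n}\,\rho_{t,\bP}\,\gap(\alphav^{(t)})$. Taking the expectation over the choice of $\bP$ and then over the history, and using $\theta\,\Exp_\bP[\rho_{t,\bP}\,|\,\alphav^{(t)}] \ge \eta_\bP$ together with $\gap(\alphav^{(t)}) \ge \epsilon^{(t)}$, gives
\[
\Exp_\bP[\epsilon^{(t+1)}\,|\,\alphav^{(0)}] \;\le\; \Big(1 - s\,\tfrac{m}{n}\,\eta_\bP\Big)\,\Exp_\bP[\epsilon^{(t)}\,|\,\alphav^{(0)}] \;+\; \gamma\, m\, s^2 .
\]
From here the standard sub-linear recursion lemma of the non-strongly-convex SDCA analysis \cite{sdca} finishes the proof: as long as the suboptimality is large the minimizing step size would exceed $1$, so one runs with $s=1$ and obtains geometric decrease --- this produces the logarithmic burn-in $t_0$ --- and afterwards switches to a diminishing step size $s_t$ of order $\tfrac{n}{2n+t-t_0}$, for which an induction on $t$ yields $\Exp_\bP[\epsilon^{(t)}\,|\,\alphav^{(0)}] \le \tfrac{1}{\eta_\bP m}\cdot\tfrac{2\gamma n^2}{2n+t-t_0}$.

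\emph{Main obstacle.} The subtle steps are: (i) choosing the competitor so the algebra collapses precisely to the coordinate gaps $\gap_j$ --- this forces the conjugate-maximizer definition of $\uv$ rather than $\uv = \alphav^\star$, and one must check this direction is a legitimate argument of \eqref{eq:subproblem1} and that \eqref{eq:theta} composes cleanly with the smoothness/convexity bound; (ii) surviving the expectation over an \emph{arbitrary}, possibly iterate-dependent, selection rule, which is exactly why the rate is stated through $\eta_\bP = \min_t \theta\,\Exp_\bP[\rho_{t,\bP}]$ rather than a per-step quantity; and (iii) the two-phase recursion bookkeeping, which must be executed carefully to reproduce the precise $2n+t-t_0$ denominator and the matching constant $2\gamma n^2/(\eta_\bP m)$.
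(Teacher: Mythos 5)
Your route is the paper's route: the paper proves a key lemma (Lemma~\ref{lemma:basic}) by plugging the competitor $\Delta\alphav_{[\bP]}=s(\uv^{(t)}-\alphav^{(t)})$ with $u_i^{(t)}\in\partial g_i^*(-\av_i^\top\wv^{(t)})$ into the $\theta$-approximation inequality \eqref{eq:theta}, bounds the $f$-part by $L$-smoothness and the $g$-part by convexity plus Fenchel--Young with equality so that the linear-in-$s$ term collapses to $-\sum_{j\in\bP}\gap_j(\alpha_j^{(t)})$, and then runs exactly your two-phase recursion (geometric burn-in up to $t_0$ with $s=1/\eta_\bP$, then $s=\tfrac{2n}{2n+(t-1)-t_0}$ and induction). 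So the ideas and their order match. Two discrepancies are worth flagging. First, your bound on the quadratic term: you read ``$B$-bounded support'' per coordinate and get $\|(\uv-\alphav^{(t)})_{[\bP]}\|^2\le 4B^2m$, hence a remainder $\gamma m s^2$; the paper instead uses the bound on the whole block, $\|\alphav\|\le B$ and $\|\uv\|\le B$, to get $\|\uv-\alphav\|^2\le 2B^2$ and a remainder $\tfrac{\theta s^2}{2}\gamma$. Carried through the induction, your extra factor of order $m$ in the second-order term inflates the final constant by the same order, so your argument as written proves a bound of the form $\tfrac{c\,\gamma n}{\eta_\bP}\cdot\tfrac{1}{\,2n/(\eta_\bP m)+t-t_0\,}$ rather than the stated $\tfrac{1}{\eta_\bP m}\tfrac{2\gamma n^2}{2n+t-t_0}$; to land on the theorem's constant (in particular the $1/m$ prefactor) you must adopt the paper's normalization of $B$. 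Second, deferring the recursion to ``the standard SDCA lemma'' hides a hypothesis the paper has to invoke explicitly, namely $\eta_\bP\ge 1$ (so that $s=1/\eta_\bP\in[0,1]$ in the burn-in phase and the inductive step $\bigl(1-\tfrac{m\eta_\bP}{2n+(t-1)-t_0}\bigr)\tfrac{1}{2n+(t-1)-t_0}\le\tfrac{1}{2n+t-t_0}$ closes); this assumption does not appear in the theorem statement and should be surfaced rather than absorbed into a citation. Everything else --- the exact collapse to the coordinate gaps, the identity $\sum_{j\in\bP}\gap_j=\tfrac{m}{n}\rho_{t,\bP}\gap(\alphav^{(t)})$, the passage to $\eta_\bP$ via the conditional expectation and the tower property, and the use of $\gap(\alphav^{(t)})\ge\epsilon^{(t)}$ --- is correct and coincides with the paper.
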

\vspace{0.1cm}
\begin{remark}
Note that for uniform selection, our proven convergence rates for Algorithm \ref{alg:SCDb} recover classical primal-dual coordinate descent \cite{duenner16, sdca} as a special case, where in every iteration a single coordinate is selected and each update is solved exactly, i.e., $\theta=1$.  In this case $\rho_{t,\bP}$ measures the contribution of a single coordinate to the duality gap. For uniform sampling, $\Exp_\bP[\rho_{t,\bP}\,|\,\alphav^{(t)}] = 1$ and hence $\eta_\bP=1$ which recovers \cite[Theorems 8 and 9]{duenner16}.
\end{remark}

\subsection{Gap-Selection Scheme}
\vspace{-2mm}
The convergence results of Theorems \ref{thm:blockSCDstronglyconvex} and \ref{thm:blockSCDlipschitz} suggest that the optimal rule for selecting the block of coordinates $\bP$ in step 3 of Algorithm \ref{alg:SCDb}, leading to the largest improvement in that step, is the following:
\begin{equation}
\bP:=\argmax_{\bP\subset [n]:|\bP|=m} \sum_{j\in \bP} \gap_j\big(\alpha_j^{(t)}\big).
\label{eq:selBlock}
\end{equation}
This scheme maximizes $\rho_{t,\bP}$ at every iterate $\alphav^{(t)}$. Furthermore, the selection scheme \eqref{eq:selBlock} guarantees $\rho_{t,\bP}\geq 1$ which quantifies the relative gain over random uniform sampling. 
In contrast to existing importance sampling schemes \cite{Zhao:2015wo,Qu:2016bd,Fercoq:2016cr} which assign static probabilities to individual coordinates, our selection scheme \eqref{eq:selBlock} is dynamic and adapts to the current state $\alphav^{(t)}$ of the algorithm, similar to that used in \cite{Osokin:2016:MGB,Perekrestenko:2017tg} in the standard non-heterogeneous setting.

\vspace{-1mm}

\section{Heterogeneous Training}
\label{sec:heter}
\vspace{-2mm}
In this section we build on the theoretical insight of the previous section to tackle the main objective of this work: How can we efficiently distribute the workload between two heterogeneous compute units \A and \B to train a large-scale machine learning model where \A and~\B fulfill the following two assumptions:


\begin{assumption}[Difference in Memory Capacity]
\label{ass:1}
 Compute unit \A can fit the whole dataset in its memory and compute unit \B can only fit a subset of the data. Hence, \B only has access to $A_{[\bP]}$, a subset $\bP$ of $m$ columns of $A$, where $m$ is determined by the memory size of~\B.
\end{assumption}

\begin{assumption}[Difference in Computational Power]
\label{ass:2}
Compute unit \B can access and process data faster than compute unit \A.
\vspace{-1mm}
\end{assumption}

\vspace{-2mm}
\subsection{\textsc{DuHL}: A Duality Gap-Based Heterogeneous Learning Scheme}
\label{sec:DUHL}
\vspace{-1mm}

We propose a duality gap-based heterogeneous learning scheme, henceforth referring to as \textsc{DuHL}, for short.  \textsc{DuHL} is designed for efficient training on heterogeneous compute resources as described above. The core idea of \textsc{DuHL} is to identify a block~$\bP$ of coordinates which are most relevant to improving the model at the current stage of the algorithm, and have the corresponding data columns, $A_{[\bP]}$, residing locally in the memory of \B. Compute unit \B can then exploit its superior compute power by using an appropriate solver to locally find a block coordinate update $\Delta \alphav_{[\bP]}$. At the same time, compute unit  \A is assigned the task of  updating the block $\bP$ of important coordinates as the algorithm proceeds and the iterates change. Through this split of workloads \textsc{DuHL} enables  full utilization of both compute units \A and \B.  
Our scheme, summarized in Algorithm \ref{alg:myscheme}, fits the theoretical framework established in the previous section and can be viewed as an instance of Algorithm~\ref{alg:SCDb},  implementing a time-delayed version of the duality gap-based selection scheme \eqref{eq:selBlock}.


\paragraph{Local Subproblem.} 
In the heterogeneous setting compute unit \B only has access to its local data $A_{[\bP]}$ and some current state $\vv:=A\alphav\in \R^d$ in order to compute a block update $\DaP$ in Step 4 of Algorithm \ref{alg:SCDb}. While for quadratic functions~$f$ this information is sufficient to optimize \eqref{eq:subproblem1}, for non-quadratic functions~$f$ we consider the following modified local optimization problem instead:
\begin{align}
 \argmin_{\DaP\in \R^n} \  f(\vv)+ \langle\nabla f(\vv),A\DaP\rangle +\frac L {2}\|  A \DaP\|_2^2
   + \sum_{i\in \bP} g_i((\alphav+\DaP)_i).\vspace{-1mm}
 \label{eq:subproblem}
\end{align}
It can be shown that the convergence guarantees of Theorems \ref{thm:blockSCDstronglyconvex} and \ref{thm:blockSCDlipschitz} similarly hold if the block coordinate update in Step 4 of Algorithm \ref{alg:SCDb} is computed on \eqref{eq:subproblem} instead of \eqref{eq:subproblem1} (see Appendix~\ref{sec:subproblem} for more details).

\paragraph{A Time-Delayed Gap Measure.}
Motivated by our theoretical findings from Section \ref{sec:blockCD}, we use the duality gap as a measure of importance for selecting which coordinates unit \B is working on. However, a scheme as suggested in \eqref{eq:selBlock} is not suitable for our purpose since it requires knowledge of the duality gaps~\eqref{eq:gapi} for every coordinate $i$ at a given iterate $\alphav^{(t)}$. For our scheme this would imply a computationally expensive selection step at the beginning of every round which has to be performed in sequence to the update step. To overcome this and enable parallel execution of the two workloads on \A and \B, we propose to introduce a \textit{gap memory}. This is an $n$-dimensional vector $\zv$ where $z_i$ measures the importance of coordinate $\alpha_i$.  We have $z_i := \gap(\alpha_i^{(t')})$ where $t'\in[0,t]$ and the different elements of $\zv$ are allowed to be based on different, possibly stale iterates $\alphav^{(t')}$. Thus, the entries of~$\zv$ can be continuously updated during the course of the algorithm.  Then, at the beginning of every round the new block $\bP$ is selected based on the current state of $\zv$ as follows:
\begin{equation}
\bP :=\argmax_{\bP\subset [n]:|\bP|=m} \sum_{j\in \bP} z_j.
\label{eq:zsel}
\end{equation}
In \textsc{DUHL}, keeping $\zv$ up to date is the job of compute unit \A. Hence, while \B is computing a block coordinate update $\DaP$, \A updates $\zv$ by randomly sampling from the entire training data. Then, as soon as \B is done, the current state of $\zv$ is used to determine $\bP$ for the next round and data columns on \B are replaced if necessary. The parallel execution of the two workloads during a single round of \textsc{DUHL} is illustrated in Figure \ref{fig:flowchart}.
Note, that the freshness of the gap-memory $\zv$ depends on the relative compute power of~\A versus \B, as well as $\theta$ which controls the amount of time spent computing on unit \B in every round. 

In Section \ref{sec:expalg} we will experimentally investigate the effect of staleness 
 of the values $z_i$ on the convergence behavior of our scheme.

\begin{figure}[t]
\centering
  \includegraphics[width=0.95\textwidth]{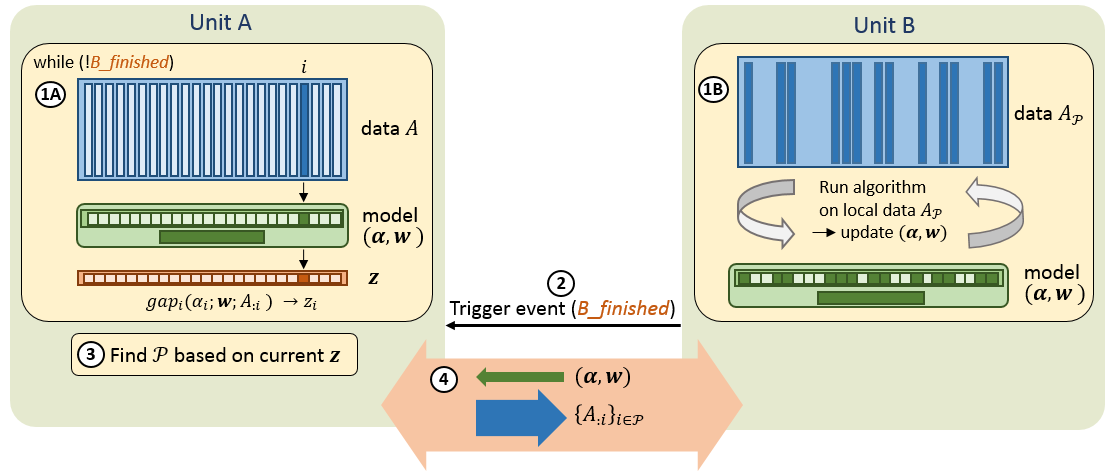}
  \caption{Illustration of one round of  \textsc{DUHL} as described in Algorithm \ref{alg:myscheme}.}
\vspace{-2mm}
\label{fig:flowchart} 
\end{figure}

\vspace{-0.1cm}
\section{Experimental Results}
\label{sec:experiments} 
\vspace{-2mm}
For our experiments we have implemented \textsc{DuHL} for the particular use-case where   \A corresponds to a CPU with attached RAM and  \B corresponds to a GPU -- \A and \B communicate over the PCIe bus.
We use an 8-core Intel Xeon E5 x86 CPU with 64GB of RAM which is connected over PCIe Gen3  to an NVIDIA Quadro M4000 GPU which has 8GB of RAM. GPUs have recently experience a widespread adoption in machine learning systems and thus this hardware scenario is timely and highly relevant. In such a setting we wish to apply \textsc{DuHL} to efficiently populate the GPU memory and thereby making this part of the data available for fast processing. 

\paragraph{GPU solver.}
In order to benefit from the enormous parallelism offered by GPUs and fulfill Assumption~\ref{ass:2}, we need a local solver capable of exploiting the power of the GPU. Therefore, we have chosen to implement the twice parallel, asynchronous version of stochastic coordinate descent (TPA-SCD) that has been proposed in \cite{TPA17} for learning the ridge regression model. In this work we have generalized the implementation further so that it can be applied in a similar manner to solve the Lasso, as well as the SVM  problem. For more details about the algorithm and how to generalize it we refer the reader to Appendix \ref{sec:generalTPASCD}.

\subsection{Algorithm Behavior}
\vspace{-1mm}
Firstly, we will use the publicly available epsilon dataset from the LIBSVM website (a fully dense dataset with 400'000 samples and 2'000 features) to study the convergence behavior of our scheme. For the experiments in this section we assume that the GPU fits $25\%$ of the training data, i.e., $m=\frac n 4$ and show results for training the sparse Lasso as well as the ridge regression model. For the Lasso case we have chosen the regularizer to obtain a support size of $\sim 12\%$ and we apply  the coordinate-wise Lipschitzing trick \cite{duenner16} to the $L_1$-regularizer in order to allow the computation of the duality gaps. For computational details we refer the reader to Appendix \ref{sec:gap}.

\begin{figure}[t!]
\centering    
 \begin{minipage}[t]{0.48\textwidth}
\subfigure[]{\label{fig:rhoa}\includegraphics[width=0.485\columnwidth]{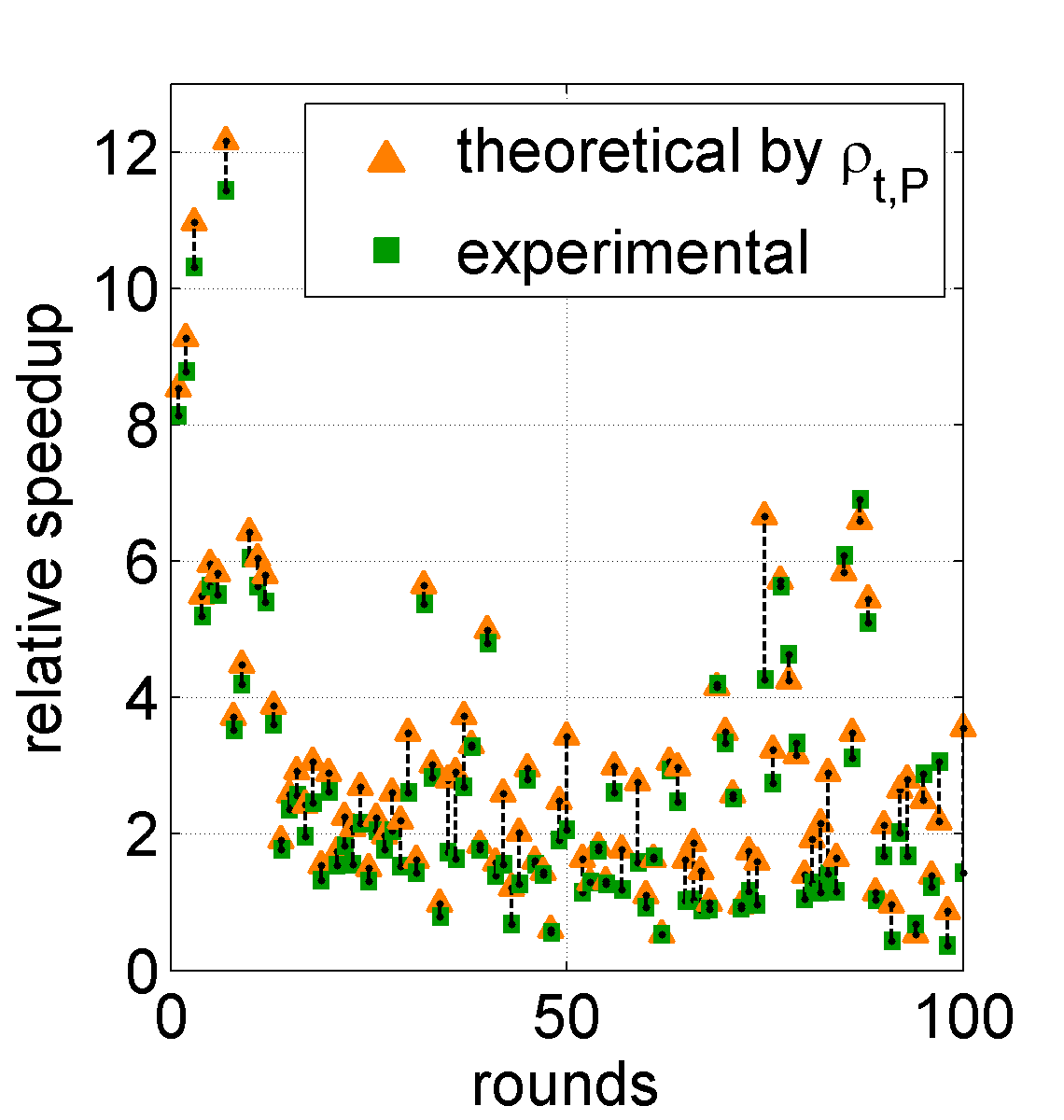}}
\subfigure[]{\label{fig:rhob}\includegraphics[width=0.485\columnwidth]{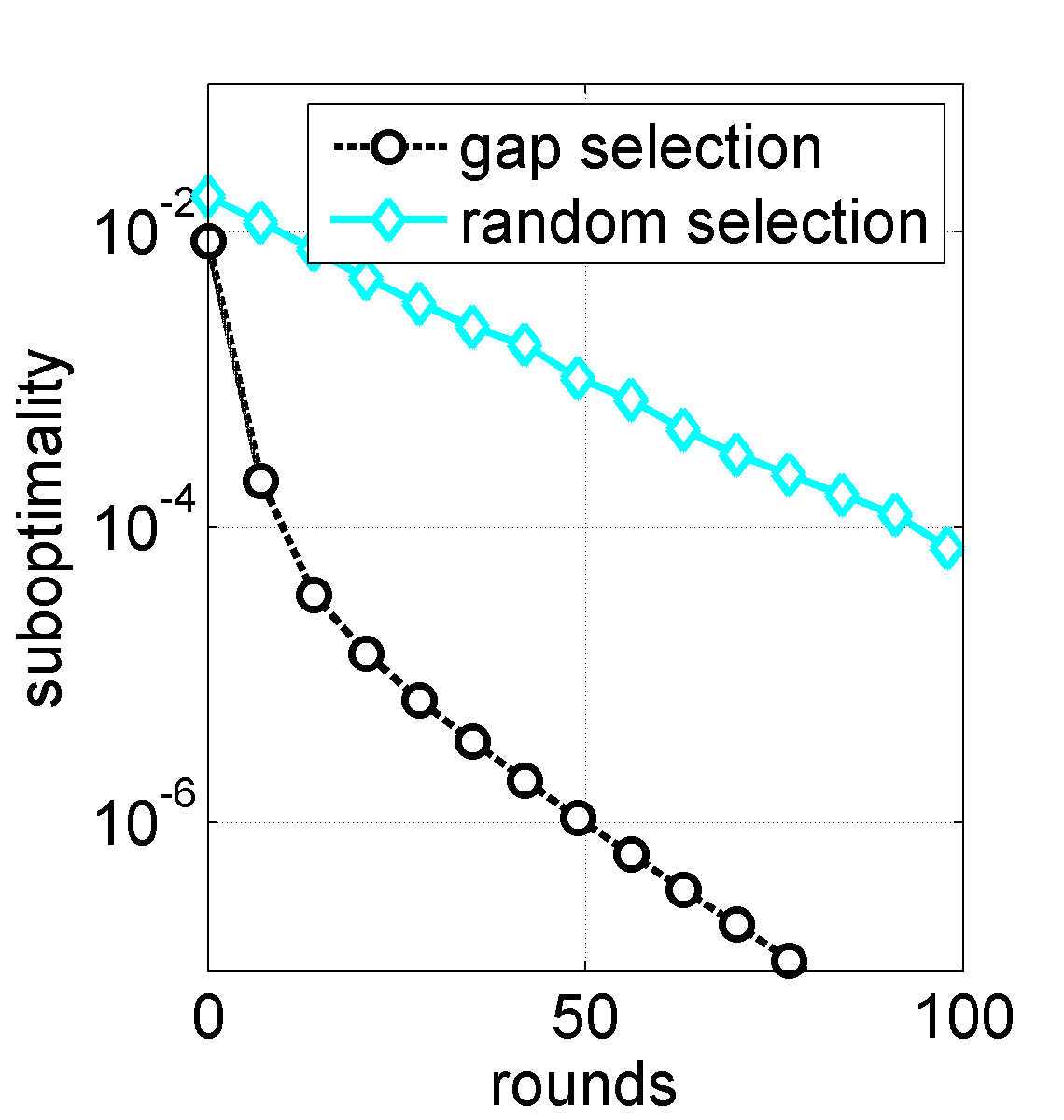}}
\vspace{-0.2cm}
\caption{Validation of faster convergence: (a) theoretical quantity $\rho_{t,\bP}$ (orange), versus the practically observed speedup (green) -- both relative to the random scheme baseline, 
 (b) convergence of gap selection compared to random selection.
}
\end{minipage}
\hspace{0.1cm}
\vspace{-0.1cm}
\begin{minipage}[t]{0.48\textwidth}
\subfigure[]{\label{fig:mema}\includegraphics[width=0.485\columnwidth]{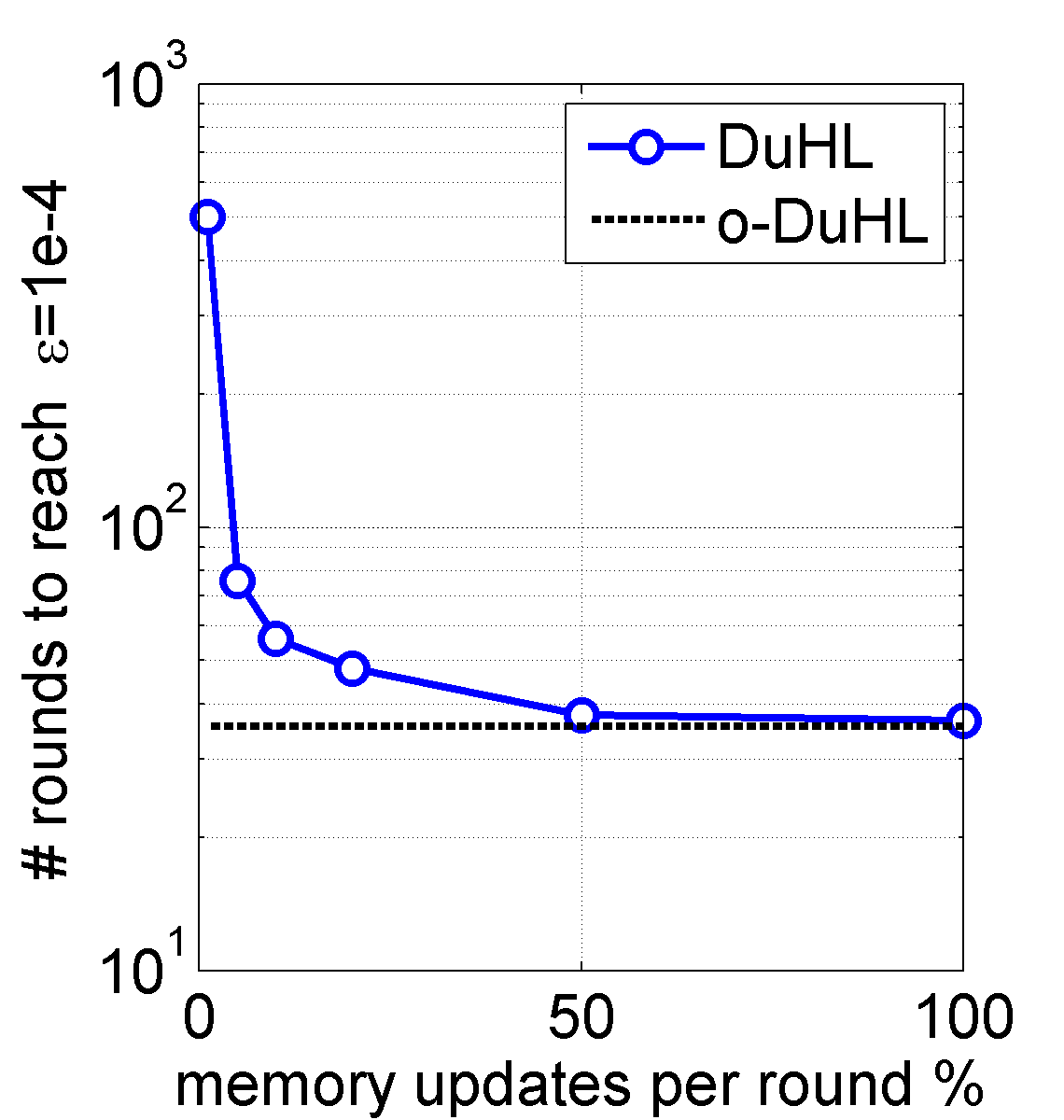}}
\subfigure[]{\label{fig:memb}\includegraphics[width=0.485\columnwidth]{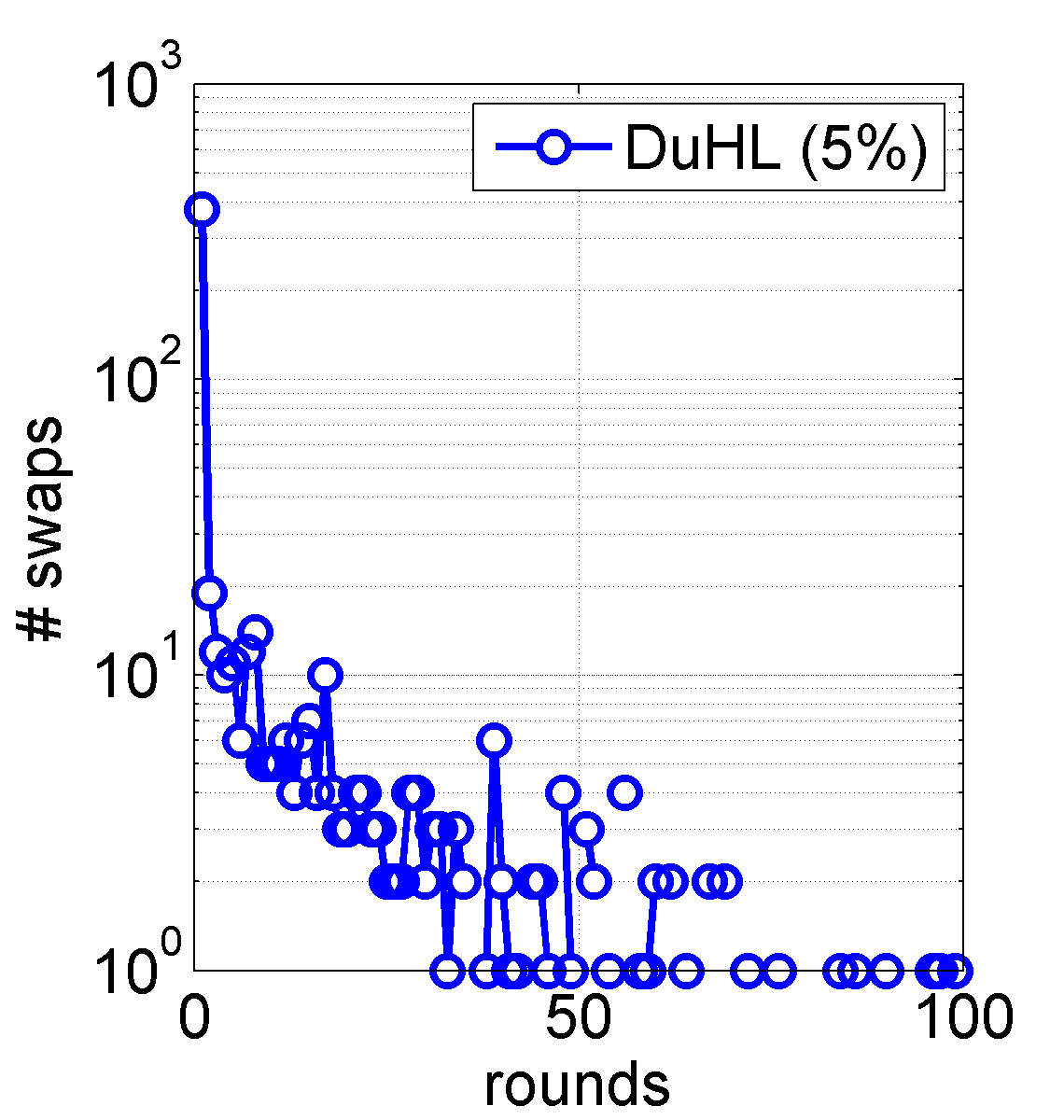}}
\vspace{-0.2cm}
\caption{Effect of stale entries in the gap memory of \textsc{DuHL}: (a) number of rounds needed to reach suboptimality $10^{-4}$ for different update frequencies compared to o-\textsc{DuHL}, (b) the number of data columns that are replaced per round for update frequency of $5\%$.}
\end{minipage}
\vspace{-0.3cm}
\end{figure}

\paragraph{Validation of Faster Convergence.} From our theory in Section \ref{sec:blockCDconvergence} we expect  that during  any given round $t$ of Algorithm \ref{alg:SCDb}, 
 the relative gain in convergence rate of one sampling scheme over the other should be quantified by the ratio of the corresponding values of $\eta_{t,\bP}:=\theta \rho_{t,\bP}$ (for the respective block of coordinates processed in that round). To verify this, we trained a ridge regression model on the epsilon dataset implementing a) the gap-based selection scheme, \eqref{eq:selBlock}, and b) random selection, fixing $\theta$ for both schemes.  Then, in every round $t$ of our experiment, we record the value of $\rho_{t,\bP}$ as defined in \eqref{eq:rhoP} and measure the relative gain in convergence rate of the gap-based scheme over the random scheme. 
In Figure \ref{fig:rhoa} we plot the effective speedup of our scheme, and observe that this speedup almost perfectly matches the improvement predicted by our theory as measured by $\rho_{t,\bP}$ - we observe an average deviation of $0.42$. Both speedup numbers are calculated relative to plain random selection.
In Figure \ref{fig:rhob} we see that the gap-based selection can achieve 
 a remarkable $10\times$ improvement in convergence over the random reference scheme.
When running on sparse problems instead of ridge regression, we have observed $\rho_{t,\bP}$ of the oracle scheme converging to $\frac n m$ within only a few iterations if the support of the problem is smaller than $m$ and fits on the GPU.

\paragraph{Effect of Gap-Approximation.} 
In this section we study the effect of using stale, inconsistent gap-memory entries for selection on the convergence of \textsc{DuHL}. While the freshness of the memory entries is, in reality, determined by the relative compute power of unit \B over unit \A and the relative accuracy $\theta$, in this experiment we artificially vary the number of gap updates performed during each round while keeping $\theta$ fixed. We train the Lasso model and show, in Figure \ref{fig:mema}, the number of rounds needed to reach a suboptimality of $10^{-4}$, as a function of the number of gap entries updated per round. As a reference we show o-\textsc{DuHL} which has access to an oracle providing the true duality gaps. We observe that our scheme is quite robust to stale gap values and can achieve performance within a factor of two over the oracle scheme up to an average delay of 20 iterations. As the update frequency decreases we observed that the convergence slows down in the initial rounds because the algorithm needs more rounds until the active set of the sparse problem is correctly detected. 




\paragraph{Reduced I/O operations.} The efficiency of our scheme regarding I/O operations is demonstrated in Figure \ref{fig:memb}, where we plot the number of data columns that are replaced on \B in every round of Algorithm \ref{alg:myscheme}. Here the Lasso model is trained assuming a gap update frequency of $5\%$. We observe that the number of required I/O operations of our scheme is decreasing over the course of the algorithm. 
When increasing the freshness of the gap memory entries we could see the number of swaps go to zero faster.  

\vspace{-0.1cm}
\subsection{Reference Schemes}
\label{sec:expalg}
\vspace{-1mm}
In the following we compare the performance of our scheme against four reference schemes.
We compare against the most widely-used scheme for using a GPU to accelerate training when the data does not fit into the memory of the GPU, that is the \textit{sequential block selection} scheme presented in~\cite{Yu:2012fp}. Here the data columns are split into blocks of size $m$ which are sequentially put on the GPU and operated on (the data is efficiently copied to the GPU as a contiguous memory block). 

We also compare against importance sampling as presented in \cite{Zhao:2015wo}, which we  refer to as IS. Since probabilities assigned to individual data columns are static we cannot use them as importance measures in a deterministic selection scheme. Therefore, in order to apply importance sampling in the heterogeneous setting, we non-uniformly sample $m$ data-columns to reside inside the GPU memory in every round of Algorithm \ref{alg:myscheme} and have the CPU determine the new set in parallel.
As we will see, data column norms often come with only small variance, in particular for dense datasets. Therefore, importance sampling often fails to give a significant gain over uniformly random selection.

 Additionally, we compare against a single-threaded CPU implementation of a stochastic coordinate descent solver to demonstrate that with our scheme, the use of a GPU in such a setting indeed yields a significant speedup over a basic CPU implementation despite the  high I/O cost of repeatedly copying data on and off the GPU memory. To the best of our knowledge, we are the first to demonstrate this.

For all competing schemes, we use TPA-SCD as the solver to efficiently compute the block update $\Delta \alphav_{[\bP]}$ on the GPU. The accuracy $\theta$ of the block update computed in every round is controlled by the number of randomized passes of TPA-SCD through the coordinates of the selected block $\bP$. For a fair comparison we optimize this parameter for the individual schemes.

\begin{figure*}[t!]
\centering 
\subfigure{\label{fig:perfaSwap}\includegraphics[height=2cm]{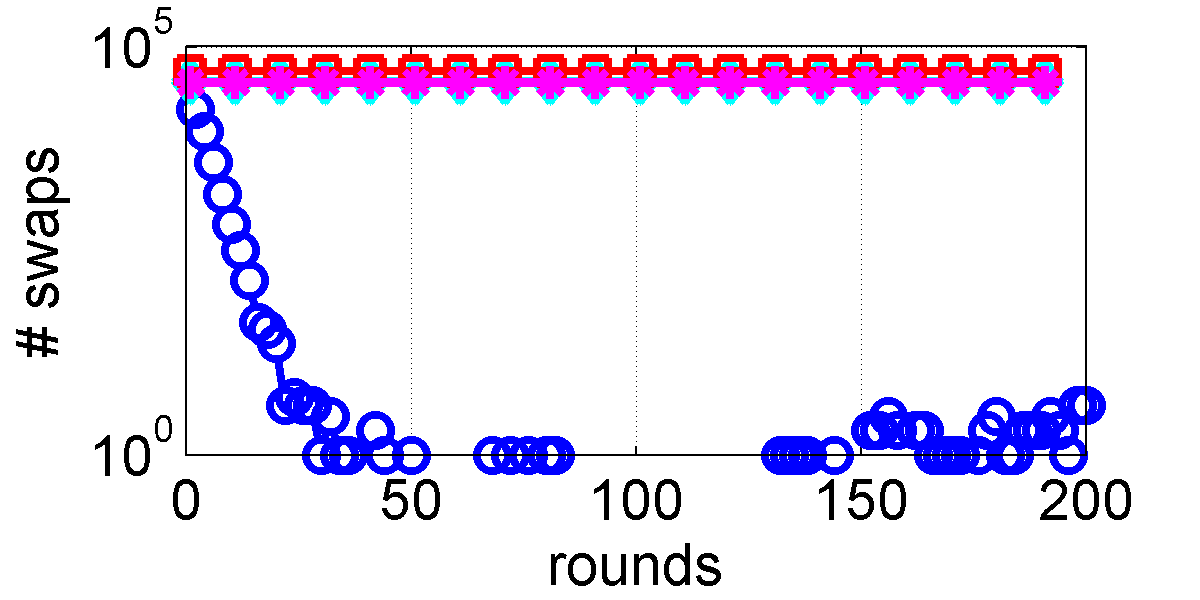}}\hspace{0.6cm}
\subfigure{\label{fig:perfbSwap}\includegraphics[height=2cm]{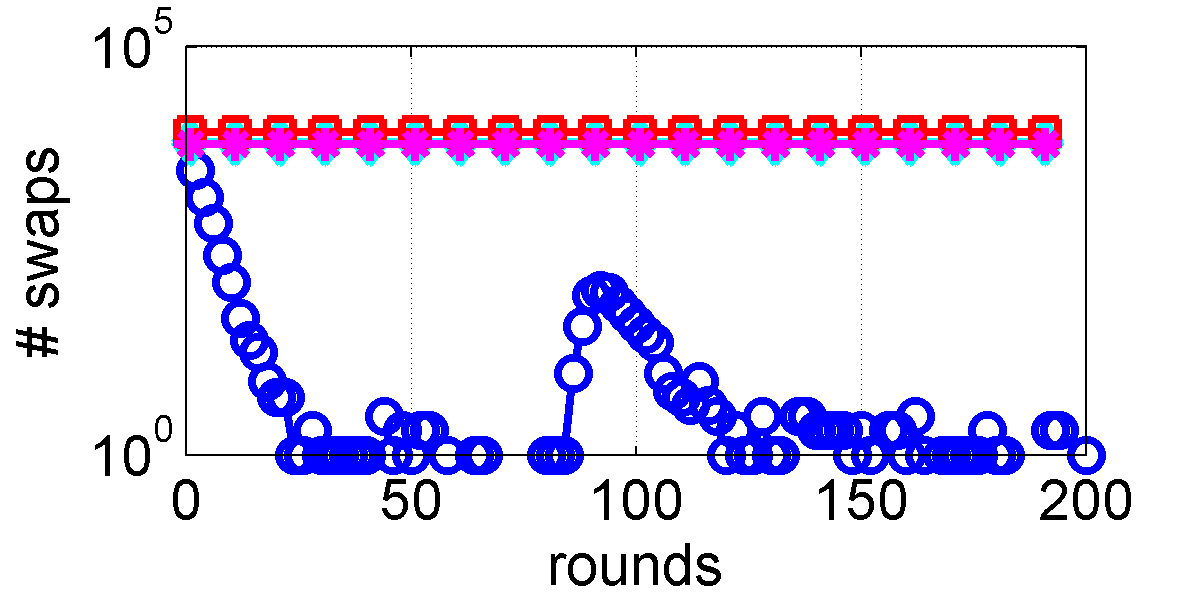}}\hspace{0.6cm}
\subfigure{\label{fig:perfcSwap}\includegraphics[height=2cm]{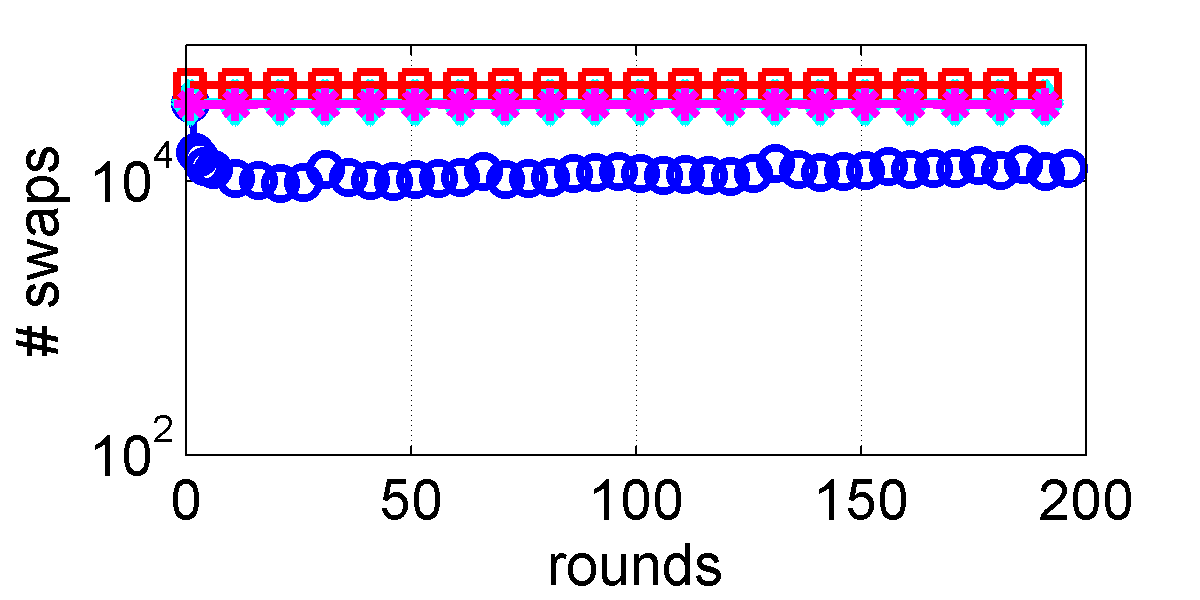}}\vspace{-3mm}
\subfigure[Lasso]{\label{fig:perfa}\includegraphics[height=4cm]{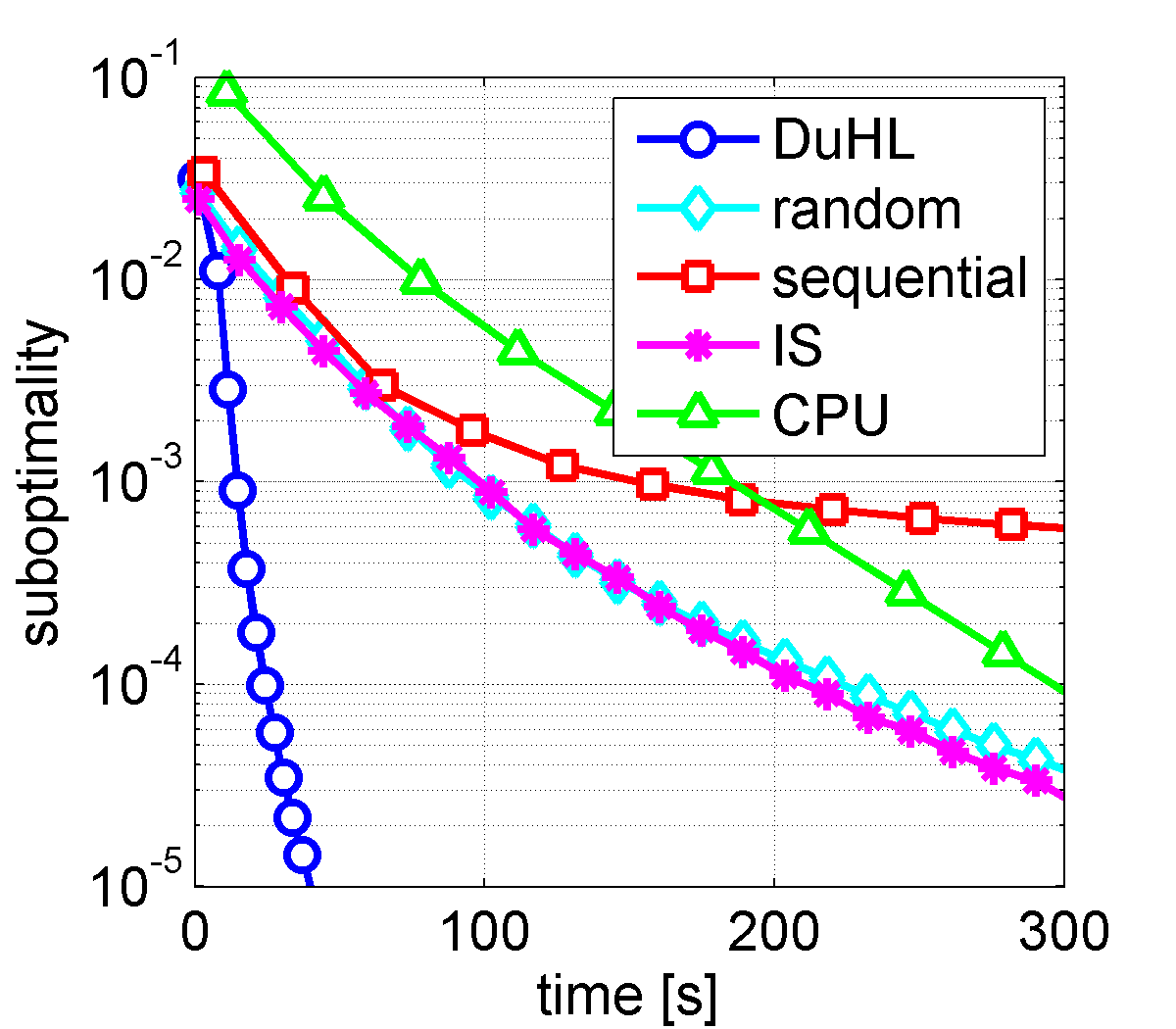}}
\subfigure[SVM]{\label{fig:perfb}\includegraphics[height=4cm]{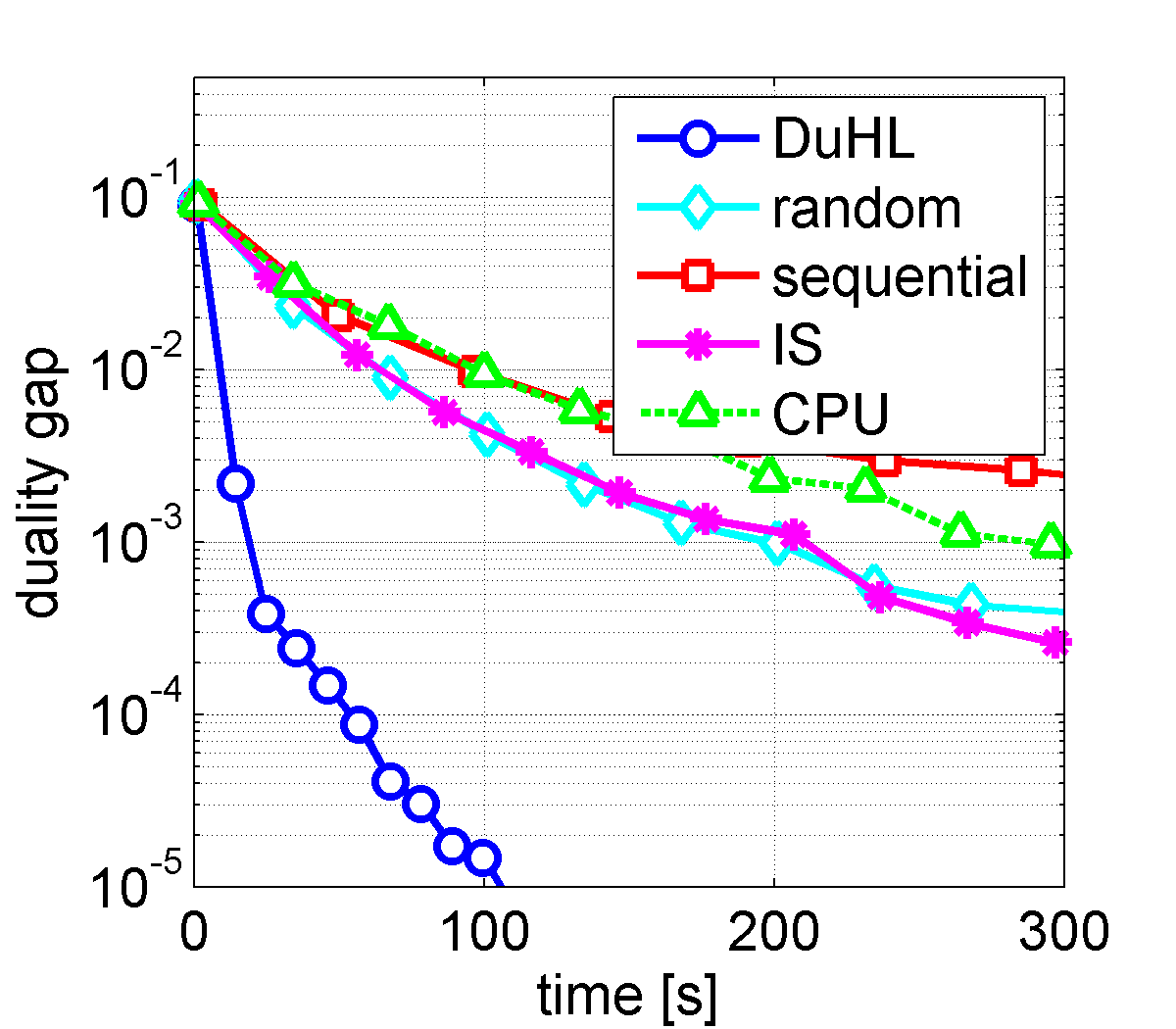}}
\subfigure[ridge regression]{\label{fig:perfc}\includegraphics[height=4cm]{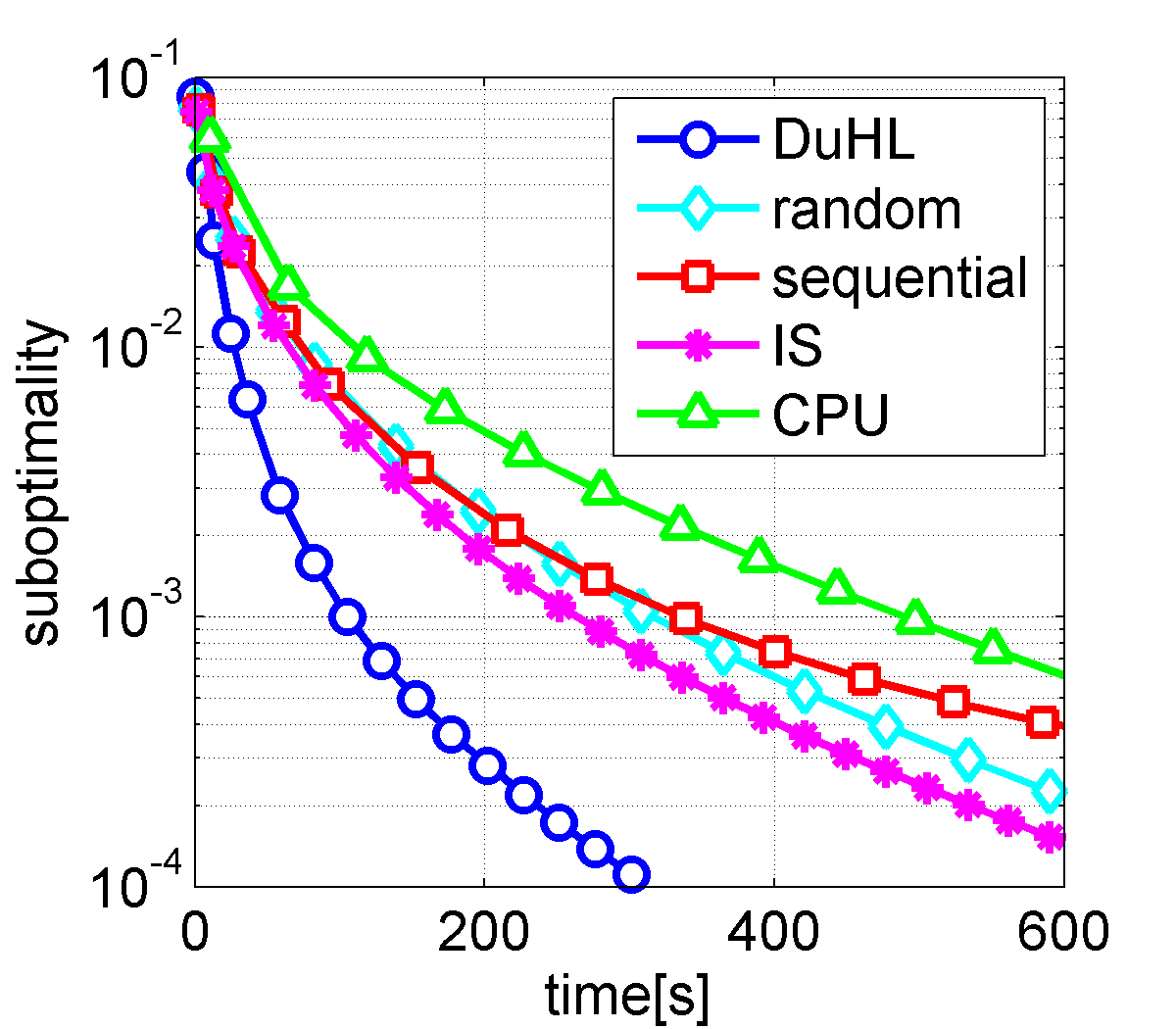}}
\vspace{-1mm}
\caption{Performance results of \textsc{DUHL} on the 30GB ImageNet dataset. I/O cost (top) and convergence behavior (bottom) for Lasso, SVM and ridge regression.}
\label{fig:performance}
\vspace{-0.3cm}
\end{figure*}

\subsection{Performance Analysis of \textsc{DUHL}}
\label{sec:performance}
\vspace{-1mm}

For our large-scale experiments we use an extended version of the Kaggle Dogs vs. Cats ImageNet dataset as presented in \cite{Heinze:2016tu}, where we additionally double the number of samples, while using single precision floating point numbers. The resulting dataset is fully dense and consists of 40'000 samples and 200'704 features, resulting in over 8 billion non-zero elements and a data size of 30GB. Since the  memory capacity of our GPU is 8GB, we can put $\sim25\%$ of the data on the GPU. We will show results for training a sparse Lasso model, ridge regression as well as linear $L_2$-regularized SVM. For Lasso we chose the regularization to achieve a support size of $12\%$, whereas for SVM the regularizer was chosen through cross-validation.
For all three tasks, we compare the performance of \textsc{DuHL} to sequential block selection, random selection, selection through importance sampling (IS) all on GPU, as well as a single-threaded CPU implementation. 
In Figure \ref{fig:perfa} and \ref{fig:perfb} we demonstrate that for Lasso as well as SVM, \textsc{DuHL} converges $10\times$ faster than any reference scheme. This gain is achieved by improved convergence --  quantified through~$\rho_{t,\bP}$ -- as well as through reduced I/O cost, as illustrated in the top plots of Figure \ref{fig:performance}, which show the number of data columns replaced per round.
The results in Figure \ref{fig:perfc} show that the application of \textsc{DuHL} is not limited to sparse problems and SVMs. Even for ridge regression \textsc{DuHL} significantly outperforms all the reference schemes considered in this study.

\section{Conclusion}
\vspace{-1mm}
We have presented a novel theoretical analysis of block coordinate descent, highlighting how the performance depends on the coordinate selection. These results prove that the contribution of individual coordinates to the overall duality gap is indicative of their relevance to the overall model optimization.
Using this measure we develop a generic scheme for efficient training in the presence of high performance resources of limited memory capacity. 
We propose \textsc{DuHL}, an efficient gap memory-based strategy to select which part of the data to make available for fast processing.
On a large dataset which exceeds the capacity of a modern GPU, we  demonstrate that our scheme outperforms existing sequential approaches by over $10\times$ for Lasso and SVM models.
Our results show that the practical gain matches the improved convergence predicted by our theory for gap-based sampling under the given memory and communication constraints, highlighting the versatility of the approach.

%

{\small
\bibliographystyle{plain}
\bibliography{bibliography,bibliography-mj}

\begin{thebibliography}{10}

\bibitem{Bauschke:2011ik}
Heinz~H Bauschke and Patrick~L Combettes.
\newblock {\em {Convex Analysis and Monotone Operator Theory in Hilbert
  Spaces}}.
\newblock CMS Books in Mathematics. Springer New York, New York, NY, 2011.

\bibitem{Chang:2011hr}
Kai-Wei Chang and Dan Roth.
\newblock {Selective block minimization for faster convergence of limited
  memory large-scale linear models}.
\newblock In {\em Proceedings of the 17th ACM SIGKDD international conference
  on Knowledge Discovery and Data Mining}, pages 699--707, New York, USA,
  August 2011. ACM.

\bibitem{Csiba:2015ue}
Dominik Csiba, Zheng Qu, and Peter Richt{\'a}rik.
\newblock {Stochastic Dual Coordinate Ascent with Adaptive Probabilities}.
\newblock In {\em ICML 2015 - Proceedings of the 32th International Conference
  on Machine Learning}, February 2015.

\bibitem{duenner16}
Celestine D{\"{u}}nner, Simone Forte, Martin Tak{\'{a}}c, and Martin Jaggi.
\newblock {Primal-Dual Rates and Certificates}.
\newblock In {\em Proceedings of the 33th International Conference on Machine
  Learning (ICML) - Volume 48}, pages 783--792, 2016.

\bibitem{Fercoq:2016cr}
Olivier Fercoq and Peter Richt{\'a}rik.
\newblock {Optimization in High Dimensions via Accelerated, Parallel, and
  Proximal Coordinate Descent}.
\newblock {\em SIAM Review}, 58(4):739--771, January 2016.

\bibitem{Heinze:2016tu}
Christina Heinze, Brian McWilliams, and Nicolai Meinshausen.
\newblock {DUAL-LOCO: Distributing Statistical Estimation Using Random
  Projections}.
\newblock In {\em AISTATS - Proceedings of the th International Conference on
  Artificial Intelligence and Statistics}, pages 875--883, 2016.

\bibitem{Matsushima:2012vg}
Shin Matsushima, SVN Vishwanathan, and Alex~J Smola.
\newblock {Linear support vector machines via dual cached loops}.
\newblock In {\em Proceedings of the 18th ACM SIGKDD international conference
  on Knowledge discovery and data mining}, pages 177--185, New York, USA, 2012.
  ACM Press.

\bibitem{Nutini:2015vd}
Julie Nutini, Mark Schmidt, Issam Laradji, Michael Friedlander, and Hoyt
  Koepke.
\newblock {Coordinate Descent Converges Faster with the Gauss-Southwell Rule
  Than Random Selection}.
\newblock In {\em ICML 2015 - Proceedings of the 32th International Conference
  on Machine Learning}, pages 1632--1641, 2015.

\bibitem{Osokin:2016:MGB}
Anton Osokin, Jean-Baptiste Alayrac, Isabella Lukasewitz, Puneet~K. Dokania,
  and Simon Lacoste-Julien.
\newblock Minding the gaps for block frank-wolfe optimization of structured
  svms.
\newblock In {\em Proceedings of the 33rd International Conference on Machine
  Learning (ICML) - Volume 48}, pages 593--602. JMLR.org, 2016.

\bibitem{TPA17}
Thomas Parnell, Celestine D{\"{u}}nner, Kubilay Atasu, Manolis Sifalakis, and
  Haris Pozidis.
\newblock {Large-Scale Stochastic Learning using GPUs}.
\newblock In {\em Proceedings of the 6th International Workshop on Parallel and
  Distributed Computing for Large Scale Machine Learning and Big Data Analytics
  (IPDPSW)}, IEEE, 2017.

\bibitem{Perekrestenko:2017tg}
Dmytro Perekrestenko, Volkan Cevher, and Martin Jaggi.
\newblock {Faster Coordinate Descent via Adaptive Importance Sampling}.
\newblock In {\em AISTATS - Artificial Intelligence and Statistics}, pages
  869--877. April 2017.

\bibitem{Qu:2016bd}
Zheng Qu and Peter Richt{\'a}rik.
\newblock {Coordinate descent with arbitrary sampling I: algorithms and
  complexity}.
\newblock {\em Optimization Methods and Software}, 31(5):829--857, April 2016.

\bibitem{sdca}
Shai Shalev-Shwartz and Tong Zhang.
\newblock Stochastic dual coordinate ascent methods for regularized loss.
\newblock {\em J. Mach. Learn. Res.}, 14(1):567--599, February 2013.

\bibitem{Smith:2016wp}
Virginia Smith, Simone Forte, Chenxin Ma, Martin Tak{\'a}{\v c}, Michael~I
  Jordan, and Martin Jaggi.
\newblock {CoCoA: A General Framework for Communication-Efficient Distributed
  Optimization}.
\newblock {\em arXiv}, November 2016.

\bibitem{duke}
Robert~L. Wolpert.
\newblock Conditional expectation.
\newblock University Lecture, 2010.

\bibitem{Yu:2012fp}
Hsiang-Fu Yu, Cho-Jui Hsieh, Kai-Wei Chang, and Chih-Jen Lin.
\newblock {Large Linear Classification When Data Cannot Fit in Memory}.
\newblock {\em ACM Transactions on Knowledge Discovery from Data}, 5(4):1--23,
  February 2012.

\bibitem{Zhao:2015wo}
Peilin Zhao and Tong Zhang.
\newblock {Stochastic Optimization with Importance Sampling for Regularized
  Loss Minimization}.
\newblock In {\em ICML 2015 - Proceedings of the 32th International Conference
  on Machine Learning}, pages 1--9, 2015.

\end{thebibliography}
}


\clearpage
\appendix
\setlength{\belowdisplayskip}{5pt} \setlength{\belowdisplayshortskip}{3pt}
\setlength{\abovedisplayskip}{5pt} \setlength{\abovedisplayshortskip}{3pt}
\part*{Appendix}
Organization of the appendix: We state detailed proofs of Theorem \ref{thm:blockSCDstronglyconvex} and Theorem \ref{thm:blockSCDlipschitz} in Appendix~\ref{sec:proofSCD}. Then, we give some background information on coordinate descent and the local subproblem in Appendix \ref{sec:B} and \ref{sec:subproblem} respectively. In Appendix \ref{sec:generalTPASCD} we then present details on the generalization of the TPA-SCD algorithm to SVM as well as Lasso. We provide exact expressions for the local updates, which together with the expression for the duality gap in Appendix \ref{sec:gap} should guide the reader on how to easily practically implement our scheme for the different settings considered in the experiments.

\section{Proofs}
In this section we state the detailed proofs of Theorem \ref{thm:blockSCDstronglyconvex} and Theorem \ref{thm:blockSCDlipschitz}.
\label{sec:proofSCD}
\subsection{Key Lemma}
\label{sec:lemma1}
\begin{lemma}\label{lemma:basic}
Consider problem formulation \eqref{eq:A}. Let $f$ be $L$-smooth. Further, let $g_i$ be $\mu$-strongly convex with convexity parameter $\mu \geq 0$ $\forall i\in[n]$. For the case $\mu=0$ we need the additional assumption of $g_i$ having bounded support.
Then, in any iteration $t$ of Algorithm \ref{alg:SCDb}
on \eqref{eq:A}, we denote the updated coordinate block by $\bP$ with $|\bP|=m$ and define
\begin{equation}
\rho_{t,\bP}:=\frac{  \frac 1 m \sum_{j\in \bP} \gap_{j}(\alpha_{j}^{(t)})}{\tfrac 1 n \sum_{i=1}^n \gap_i(\alpha_i^{(t)})}
\end{equation}
Then, for any $s\in [0,1]$, it holds that
\begin{eqnarray}
\label{eq:lemB}
\Exp_{\bP}\left[ \bO(\vc{\alphav}{t})- \bO(\vc{\alphav}{t+1})|\alphav^{(t)}\right]& \geq&\theta \left[ s  \frac m n \Exp_{\bP}\big[ \rho_{t,\bP}|\alphav^{(t)}\big]  \gap(\alphav^{(t)})+ \frac{ s^2}{2} \gamma_{\bP}^{(t)}\right]
\end{eqnarray}
where
\begin{equation}
\gamma_{\bP}^{(t)}:=\Exp_{\bP} \left[\frac {\mu(1-s)} s \|\uv^{(t)}-\alphav^{(t)}\|^2 - L 
\|   A  (\uv^{(t)}-\alphav^{(t)}) \|^2 \big|\alphav^{(t)}\right]. \label{eq:Fblock}
\end{equation}
and $\vc{u_i}{t} \in \partial g_i^*(-\av_i^\top\wv(\vc{\alphav}{t}))$. 
\end{lemma}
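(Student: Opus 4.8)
The plan is to mimic the single-coordinate primal-dual analysis of \cite{duenner16,sdca}, but to carry out the update along a whole block $\bP$ and to keep track of the block-restricted gap contribution rather than bounding it by the average. First I would fix an iteration $t$, write $\alphav=\alphav^{(t)}$, $\wv=\wv(\alphav)=\nabla f(A\alphav)$, and for the chosen block $\bP$ consider the candidate update $\DaP$ defined, for $i\in\bP$, by $(\DaP)_i := s\,(u_i-\alpha_i)$ with $s\in[0,1]$ and $u_i\in\partial g_i^*(-\av_i^\top\wv)$, and $(\DaP)_i:=0$ otherwise. Since $\Delta\alphav^\star_{[\bP]}$ is the exact block minimizer and the actual update is $\theta$-approximate, Definition~\ref{eq:theta} gives
\begin{equation}
\bO(\alphav+\DaP^{\textup{actual}}) \le \theta\,\bO(\alphav+\Delta\alphav^\star_{[\bP]}) + (1-\theta)\bO(\alphav) \le \theta\,\bO(\alphav+\DaP) + (1-\theta)\bO(\alphav),
\end{equation}
so it suffices to lower bound $\bO(\alphav)-\bO(\alphav+\DaP)$ by the bracketed quantity in \eqref{eq:lemB} (before taking $\Exp_\bP$), then take expectations over $\bP$. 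This reduces everything to a deterministic estimate for the particular feasible direction above.

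Next I would expand $\bO(\alphav+\DaP)=f(A\alphav+A\DaP)+\sum_i g_i((\alphav+\DaP)_i)$. For the smooth part, $L$-smoothness of $f$ gives $f(A\alphav+A\DaP)\le f(A\alphav)+\langle\wv,A\DaP\rangle+\tfrac L2\|A\DaP\|^2$; note $A\DaP = s\,A_{[\bP]}(\uv-\alphav)_{[\bP]}$, which will produce the $-\tfrac{Ls^2}{2}\|A(\uv-\alphav)\|^2$ term once we restrict attention to coordinates in $\bP$. For the separable part, on each $i\in\bP$ use $\mu$-strong convexity of $g_i$ along the segment from $\alpha_i$ to $u_i$: $g_i(\alpha_i+s(u_i-\alpha_i)) \le (1-s)g_i(\alpha_i)+s\,g_i(u_i) - \tfrac{\mu}{2}s(1-s)(u_i-\alpha_i)^2$. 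Then invoke the Fenchel–Young equality that holds because $u_i\in\partial g_i^*(-\av_i^\top\wv)$, namely $g_i(u_i)+g_i^*(-\av_i^\top\wv) = -\av_i^\top\wv\,u_i$, to replace $g_i(u_i)$ by $-\av_i^\top\wv\,u_i - g_i^*(-\av_i^\top\wv)$. Collecting the linear, $g_i$, and $g_i^*$ terms over $i\in\bP$ and comparing with the definition \eqref{eq:gapi} of $\gap_i(\alpha_i)=\wv^\top\av_i\alpha_i+g_i(\alpha_i)+g_i^*(-\av_i^\top\wv)$, the first-order-in-$s$ contribution becomes exactly $s\sum_{j\in\bP}\gap_j(\alpha_j)$. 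Since $\sum_{j\in\bP}\gap_j(\alpha_j) = m\,\rho_{t,\bP}\cdot\tfrac1n\gap(\alphav) = \tfrac mn\,\rho_{t,\bP}\,\gap(\alphav)$ by the definition of $\rho_{t,\bP}$ and \eqref{eq:gapi}, this matches the leading term of \eqref{eq:lemB}. The remaining $O(s^2)$ terms, $-\tfrac{Ls^2}{2}\|A(\uv-\alphav)\|^2$ from the smoothness step and $+\tfrac{\mu}{2}s(1-s)\|\uv-\alphav\|^2$ restricted to $\bP$, assemble (after writing $s(1-s)=s^2\tfrac{1-s}{s}$) into $\tfrac{s^2}{2}\gamma_\bP^{(t)}$ as defined in \eqref{eq:Fblock}. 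Taking $\Exp_\bP[\cdot\,|\,\alphav^{(t)}]$ and multiplying through by $\theta$ finishes the proof; the $\mu=0$ case needs bounded support of $g_i$ only to guarantee that $u_i\in\partial g_i^*(-\av_i^\top\wv)$ is finite so the above manipulations are well defined.

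The main obstacle I anticipate is bookkeeping around the block restriction: one must be careful that every term produced by the smoothness and strong-convexity steps genuinely involves only indices in $\bP$ (so that it can be written via $A_{[\bP]}$ and $(\uv-\alphav)_{[\bP]}$), and that the cross terms in $\|A\DaP\|^2$ are handled correctly — here $\DaP$ is supported on $\bP$ so $\|A\DaP\|^2=\|A_{[\bP]}(\DaP)_{[\bP]}\|^2$ and no cross terms with coordinates outside $\bP$ arise. A secondary subtlety is the direction of the inequality through the $\theta$-approximation step and making sure that using the suboptimal feasible $\DaP$ in place of $\Delta\alphav^\star_{[\bP]}$ only weakens the bound, which it does since $\bO(\alphav+\Delta\alphav^\star_{[\bP]})\le\bO(\alphav+\DaP)$. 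Everything else is the standard smoothness-plus-strong-convexity expansion, so no genuinely new idea beyond tracking $\rho_{t,\bP}$ is required.
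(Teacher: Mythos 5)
Your proposal is correct and follows essentially the same route as the paper's proof: reduce to the exact block minimizer via the $\theta$-approximation inequality, test the trial direction $s(\uv-\alphav)$ supported on $\bP$ with $u_i\in\partial g_i^*(-\av_i^\top\wv)$, bound the smooth part by $L$-smoothness and the separable part by $\mu$-strong convexity plus the Fenchel--Young equality, and take expectation over $\bP$. The bookkeeping points you flag (block support of $\DaP$, direction of the approximation inequality, bounded support ensuring the subgradient exists when $\mu=0$) are exactly the ones the paper handles, so no gap remains.
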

\begin{proof}
First note that in every round of Algorithm \ref{alg:SCDb},  $\alphav^{(t)}\rightarrow \alphav^{(t+1)}$, only coordinates $i\in\bP$ are changed and a $\theta$-approximate solution is computed on these coordinates. Hence, the improvement $\Delta_\bD^t:=\bD(\vc{\alphav}{t}) - \bD(\vc{\alphav}{t+1})$ in the objective \eqref{eq:A} can be written as
\begin{eqnarray}
\Delta_\bD^t& =&\bD(\vc{\alphav}{t}) - \bD(\vc{\alphav}{t}+\DaP) \notag\\
 & \geq& \bD(\vc{\alphav}{t}) - \left[(1-\theta)\bD(\vc{\alphav}{t})+\theta \bD(\vc{\alphav}{t}+\DaP^\star)\right] \notag\\
 &=& \theta \left[\bD(\vc{\alphav}{t}) - \min_{\DaP}\bD(\vc{\alphav}{t}+\DaP)\right] \label{eq:pr1}.
\end{eqnarray}
In order to  lower bound \eqref{eq:pr1} we look at a specific update direction: $\DaP=s(\uv^{(t)}-\alphav^{(t)})$ with $\vc{u_i}{t} \in \partial g_i^*(-\av_i^\top\wv(\vc{\alphav}{t}))$  for $i\in \bP$ ($u_i^{(t)}=\alpha_i^{(t)}$ otherwise) and some $s\in[0,1]$. Note that for the subgradient to be well defined even for non-strongly convex functions $g_i$ we need the bounded support assumption on $g_i$.
\newline
This yields
\begin{eqnarray}
 \Delta_\bD^t
&\geq& \theta \left[\bD(\vc{\alphav}{t}) - \bD(\vc{\alphav}{t}+s(\uv^{(t)}-\alphav^{(t)}))\right]\notag\\
& =&\theta \big[\underbrace{  f(A \vc{\alphav}{t}) -  f(A (\alphav^{(t)}+s(\uv^{(t)}-\alphav^{(t)})))}_{\Delta f}\big]\notag
\\[-5pt]
&&\qquad \qquad \qquad \qquad \qquad \qquad  +\theta \sum_{i\in\bP} \big[ \underbrace{g_i(\vc{\alpha_i}{t})  -  g_i(\vc{\alpha_i}{t}+s(u_i^{(t)}-\alpha_i^{(t)})) }_{\Delta_{g_i}} \big].\notag
\end{eqnarray}
First, to bound $\Delta_f$ we use the fact that the function $f: \R^d \to \R$ has Lipschitz continuous gradient  with constant $L$ which yields
\begin{eqnarray}
\Delta_f &\geq&-\ve{\nabla f(A \alphav^{(t)}) }{ A s(\uv^{(t)}-\alphav^{(t)})}  
-\frac{L}{2}\|   A  s(\uv^{(t)}-\alphav^{(t)}) \|^2\notag\\
&=&-\sum_{i\in\bP}  \av_i ^\top \wv^{(t)} s(u_i^{(t)}-\alpha_i^{(t)})  
-\frac{L s^2}{2}\|   A  (\uv^{(t)}-\alphav^{(t)}) \|^2.\label{eq:Df}
\end{eqnarray}
Then, to bound  $\Delta_{g_i}$ we use $\mu$-strong convexity of $g_i$ together with  the Fenchel-Young inequality $g_i(u_i) \geq - u_i\av_i^\top\wv -g_i^*(-\av_i^\top\wv)$ which holds with equality at $u_i\in \partial g_i^*(-\av_i^\top \wv) $ and find
\begin{eqnarray}
\Delta_{g_i}
& \geq& - s g_i(u_i^{(t)})+s g_i(\alpha_i^{(t)})  + \tfrac{\mu}{2} s(1-s)(u_i^{(t)}-\alpha_i^{(t)})^2\notag \\
&=&  s u_i\av_i^\top\wv^{(t)} + s g_i^*(-\av_i^\top\wv^{(t)})+s g_i(\alpha_i^{(t)})  + \tfrac{\mu}{2} s(1-s)(u_i^{(t)}-\alpha_i^{(t)})^2.  \label{eq:Dg}
\end{eqnarray}

Finally, recalling the definition of the duality gap \eqref{eq:gapi} and combining \eqref{eq:Df} and \eqref{eq:Dg} yields
\begin{eqnarray*}
\Delta_\bD^t&\geq&\theta \ \Delta_f + \theta \sum_{i\in \bP} \Delta_{g_i}\\
& \geq&\theta  \sum_{i\in\bP} s \gap_i(\alpha_i^{(t)}) + \frac{\theta s^2}{2} \left[\frac {\mu(1-s)} s \|\uv^{(t)}-\alphav^{(t)}\|^2 - L  \|   A  (\uv^{(t)}-\alphav^{(t)}) \|^2\right]. 
\end{eqnarray*}

To conclude the proof we recall the definition of $\rho_{t,\bP}$ in \eqref{eq:rhoP} and take the expectation over the choice of the coordinate block $\bP$ which yields
\begin{eqnarray}
\Exp_{\bP}\left[ \bD(\vc{\alphav}{t})- \bD(\vc{\alphav}{t+1})|\alphav^{(t)}\right]& \geq&\theta s  \frac m n \Exp_{\bP}\big[ \rho_{t,\bP}|\alphav^{(t)}\big]  \gap(\alphav^{(t)})+ \frac{\theta s^2}{2} \gamma_{\bP}^{(t)}
\end{eqnarray}
with
\begin{equation}
\gamma_{\bP}^{(t)}:=\Exp_{\bP} \left[\frac {\mu(1-s)} s \|\uv^{(t)}-\alphav^{(t)}\|^2 - L 
\|   A  (\uv^{(t)}-\alphav^{(t)}) \|^2 \Big|\alphav^{(t)}\right].
\end{equation}

\end{proof}
\subsection{Proof Theorem \ref{thm:blockSCDstronglyconvex}}
\begin{proof}
For strongly convex function $g_i$ we have $\mu>0$ in Lemma \ref{lemma:basic}. This allows us to choose $s$ such that $\gamma_{\bP}^{(t)}$ in  \eqref{eq:lemB}
 vanishes. That is $s=\frac{\mu}{\frac \sigma \beta + \mu}$, where 
\begin{equation}
\sigma:=\|A_{[\bP]}\|^2 = \max_{\vv\in \R^n} \frac{\|A_{[\bP]}\vv\|^2}{\|\vv\|^2}.
\label{eq:sig}
\end{equation}
 This yields 
 \begin{eqnarray*}
\Exp_{\bP}\left[ \bD(\vc{\alphav}{t})- \bD(\vc{\alphav}{t+1})|\alphav^{(t)}\right]& \geq&\theta s  \frac m n \Exp_{\bP}\big[ \rho_{t,\bP}|\alphav^{(t)}\big]  \gap(\alphav^{(t)}).
\end{eqnarray*}
Now rearranging terms and exploiting that the duality gap always upper bounds the suboptimality we get the following recursion on the suboptimality $\epsilon^{(t)}:=\bD(\alphav^{(t)})-\bD(\alphav^\star)$:
\begin{eqnarray*}
\Exp_{\bP}\left[\epsilon^{(t+1)}|\alphav^{(t)}\right]& \leq&\left(1-\theta  s \frac m n \ \Exp_{\bP}\big[ \rho_{t,\bP}|\alphav^{(t)}\big] \ \right)\epsilon^{(t)}.
\end{eqnarray*}
Defining $\eta_{\bP}:=\min_t \theta \EP{\rho_{t,\bP}}{t}$
and recursively applying the \textit{tower property} of conditional expectations \cite{duke} which states
\begin{eqnarray*}
\Exp_{\bP}\left[\Exp_{\bP}\big[\epsilon^{(t+1)}|\alphav^{(t)}\big]|\alphav^{(t-1)}\right] = \Exp_{\bP}\big[\epsilon^{(t+1)}|\alphav^{(t-1)}\big] 
\end{eqnarray*}
we find 
\begin{eqnarray*}
\Exp_{\bP}\big[\epsilon^{(t+1)}|\alphav^{(0)}\big] & \leq&\left(1- s \frac m n \eta_{\bP} \ \right)^t\epsilon^{(0)}
\end{eqnarray*}
which concludes the proof.
\end{proof}

\subsection{Proof Theorem \ref{thm:blockSCDlipschitz}}
\begin{proof}
For the case where $\mu=0$  Lemma \ref{lemma:basic} states:

\begin{eqnarray*}
\EP{ \bD(\vc{\alphav}{t}) - \bD(\vc{\alphav}{t+1}) }{t} & \geq&s \theta \frac m n   \Exp_{\bP}\big[ \rho_{t,\bP}|\alphav^{(t)}\big] \gap(\alphav^{(t)}) \\&&- \frac{\theta s^2}{2}L \EP{\|   A  (\uv^{(t)}-\alphav^{(t)}) \|^2}{t}.
\end{eqnarray*}

Now rearranging terms, using $\sigma$ as defined in \eqref{eq:sig} and $\epsilon^{(t)}\leq \gap(\alphav^{(t)})$, we find

\begin{eqnarray*}
\EP{ \vc{\epsilon}{t+1}}{t}& \leq&\left(1-s \theta \frac m n \  \EP{\rho_{t,\bP}}{t} \right)  \epsilon^{(t)} + \frac{\theta s^2}{2}{L  \sigma \ }\EP{ \|   \uv^{(t)}-\alphav^{(t)} \|^2}{t}.
\end{eqnarray*}

In order to bound the last term in the above expression we use 1) the fact that $\sum_{i\in \bP} g_i$ has $B$-bounded support which implies $\|\alphav\|\leq B$ and 2) the duality between bounded support and Lipschitzness which implies $\|\uv\|\leq B$ since $\uv\in\partial \sum_{i\in\bP} g_i^*(- \av_i^\top \wv)$. Then, by triangle inequality we find $\|   \uv-\alphav \|^2\leq 2 B^2$ which yields the following recursion on the suboptimality for non strongly-convex $g_i$:

\begin{eqnarray}
\EP{ \vc{\epsilon}{t+1}}{t}& \leq&\left(1-s \theta \EP{\rho_{t,\bP}}{t}\frac m n \right)  \epsilon^{(t)} + \frac {s^2} 2 \theta \gamma,
\end{eqnarray}

where $\gamma:= { 2  L B^2 \sigma} $. Now defining $\eta_\bP:=\min_t \theta \  \EP{\rho_{t,\bP}}{t}$ and assuming $ \eta_\bP \geq 1\ , \forall t$ 
we can upperbound the suboptimality at iteration $t$ as 

\begin{align}
\EP{\vc{\epsilon}{t}}{0}  \leq \frac 1 {\eta_{\bP}  m} \ \frac{2 \gamma n^2}{2n +t-t_0} 
\end{align}

with $t\geq t_0=\max\left\{0,\tfrac n m  \log\left(\frac{ 2\eta_{\bP}  m  \epsilon^{(0)}}{\gamma n }\right)\right\}$.
\newline
\newline
Similar to \cite{duenner16} we prove this by induction:
\newline
\newline
\underline{$t=t_0$}: Choose $s:=\frac 1 {\eta_{\bP}}$ where ${\eta_{\bP}} = \min_t \theta \EP{\rho_{t,\bP}}{t}$. Then at $t=t_0$, we have
\begin{eqnarray*}
 \EP{\vc{\epsilon}{t}}{0}& \leq&\left(1-\frac m n \right) \EP{  \epsilon^{(t-1)}}{0} + \frac {s^2} 2  \theta \gamma \\
 &\leq&\left(1-\frac{m}{n}\right)^t \epsilon^{(0)} +\sum_{i=0}^{t-1} \left(1-\frac{m}{n}\right)^i    \frac {\theta \gamma}{2\eta^2}\\
&\leq& \left(1-\frac{m}{n}\right)^t \epsilon^{(0)} +\frac{1}{1-(1-m/n)}  \frac {\theta \gamma}{2\eta^2}\\
&\leq& e^{-t m /n} \epsilon^{(0)} +  \frac { \theta n \gamma}{2 m{\eta_{\bP}}^2}\\
&\overset{\theta<\eta}{\leq}&   \frac {n \gamma}{ m {\eta_{\bP}}}.
\end{eqnarray*}

{\underline{$ t>t_0$}:} For $t>t_0$ we use an inductive argument. Suppose the claim holds for $t$, giving
\begin{eqnarray*}
\EP{\vc{\epsilon}{t}}{t-1}  &\leq& \left( 1-\theta \EP{\rho_{t-1,\bP}}{t-1}  \frac{s \ m }{n}\right) \epsilon^{(t-1)} -\frac {s^2}2  \frac{m}{n}   \theta \gamma,\\
&  \leq& \left( 1-{\eta_{\bP}} \frac{s \ m }{n}\right) \frac 1 {{\eta_{\bP}}  } \ \frac{2 \gamma n}{2n +(t-1)-t_0}   -\frac {s^2}2  \frac{m}{n}   \theta \gamma,\\
\end{eqnarray*}

\newpage
\vspace{0.2cm}
then, choosing $s=\frac{2n}{2n+(t-1)-t_0}\in[0,1]$ and applying the tower property of conditional expectations we find
\begin{eqnarray*}
\EP{\epsilon^{(t)}}{0}&\leq& \left(1-  \frac{2 m{\eta_{\bP}}}{2n+(t-1)-t_0}\right)\frac 1 {{\eta_{\bP}} } \ \frac{2 \gamma n}{2n +(t-1)-t_0} \\&& +\left(\frac{2n}{2n+(t-1)-t_0}\right)^2  \frac{m}{n}  \frac{ \theta \gamma}{2}\\
&\overset{\theta<1}\leq& 
\left(1-\frac{m {\eta_{\bP}}}{2n+(t-1)-t_0}\right)\frac 1 {{\eta_{\bP}} }\frac{2  \gamma n}{2n+(t-1)-t_0} \\
&=&\frac 1 {{\eta_{\bP}}  } \frac{2  \gamma n}{(2n+(t-1)-t_0)} \frac{2n+(t-1)-t_0- m {\eta_{\bP}}}{2n+(t-1)-t_0}\\
&\leq& \frac 1 {{\eta_{\bP}} } \frac{2  \gamma n}{(2n+t-t_0)}.
\end{eqnarray*}
\end{proof}

\vspace{-0.8cm}
\section{ Coordinate Descent}
\vspace{-0.2cm}
\label{sec:B}
The classical coordinate descent scheme as described in Algorithm \ref{alg:SCD} solves for a single coordinate exactly in every round. This algorithm can be recovered as a special case of approximate block coordinate descent presented in Algorithm \ref{alg:SCDb} where $m=1$ and $\theta=1$.
In this case, similar to $\rho_{t,\bP}$  we define
\begin{equation}
\rho_{t,i}:= \frac {\gap_i(\alpha_i^{(t)})}{\tfrac 1 n \sum_{j\in [n]} \gap_j(\alpha_j^{(t)})}
\end{equation}

which quantifies how much a single coordinate $i$ of iterate $\alphav^{(t)}$ contributes to the duality gap \eqref{eq:gapi}. 

\paragraph{Strongly-convex $g_i$.}
Using Theorem \ref{thm:blockSCDstronglyconvex} we find that for 
 Algorithm \ref{alg:SCD} running on \eqref{eq:A} where $f$ is $L$-smooth and  $g_i$ is $\mu$-strongly  convex with $\mu>0$ for all $i\in[n]$, it holds that
 
\begin{equation}
\Exp_j[\epsilon^{(t)}\,|\, \alphav^{(0)}]\leq \left(1-\rho_{\min}  \left[\frac{\mu {}}{\mu {}+L R^2}\right] \frac {1} n \right)^{t}\epsilon^{(0)},
\end{equation}

where $R$ upper bounds the column norm of  $A$ as  ${\|\av_{i}\|}\leq R \ \forall i\in[n]$, $\rho_{\min} := \min_t \Exp_j[\rho_{t,j}\,|\,\alphav^{(t)}]$ and expectations are taken over the sampling distribution.

\paragraph{General convex $g_i$.}
Using Theorem \ref{thm:blockSCDlipschitz} we find that for  Algorithm \ref{alg:SCD} running on \eqref{eq:A} where $f$ is $L$-smooth and  $g_i$ has $B$-bounded support  for all $i\in[n]$ it holds that
\begin{align} 
\Exp_j[\epsilon^{(t)}\,|\, \alphav^{(0)}]  \leq \frac 1 {\rho_{\min}} \ \frac{2 \gamma n^2}{2n +t-t_0} 
\end{align}
with $t\geq t_0=\max\left\{0,n \log\left(\frac{ 2\rho_{\min}\epsilon^{(0)}}{\gamma n}\right)\right\}$ and  $\gamma=2  L B^2 R^2$.

Note that these two results also cover widely used uniform sampling as a special case, where  the coordinate $j$ in step 3 of Algorithm \ref{alg:SCD} is sampled uniformly at random and hence $\Ej{\rho_{t,j}}{t}= 1$ which yields $\rho_\text{min}=1$. In this case we exactly recover the convergence results of \cite{duenner16, sdca}.

\begin{algorithm}[b]
   \caption{Coordinate Descent}
   \label{alg:SCD}
\begin{algorithmic}[1]
   \STATE Initialize $\alphav^{(0)} = \0$
    \FOR{$t=0,1,2,.....$ }
   \STATE select coordinate $i$ 
   \STATE $\Delta \alpha_i = \argmin_{\Delta \alpha} \bO(\alphav + \ev_i \Delta \alpha)$
   \STATE $\alphav^{(t+1)} = \alphav^{(t)}+ \ev_i \Delta\alpha_i$
   \ENDFOR
\end{algorithmic}
\end{algorithm}

\section{Local Subproblem}
\label{sec:subproblem}
In Section \ref{sec:DUHL}, we have suggested to replace the local optimization problem in Step 4 of Algorithm \ref{alg:SCDb} with a simpler quadratic local problem. More precisely, to replace
\begin{align}
 \argmin_{\DaP\in \R^n}& \;\; f(A(\alphav+\DaP)) + \sum_{i\in \bP} g_i((\alphav+\Delta \alphav)_i)
 \label{eq:opt1}\tag{5}
\end{align} 
by instead
\begin{align}
 \argmin_{\DaP\in \R^n}& \;\; f(A\alphav)+\nabla f(A\alphav)^\top A\DaP+ \frac L {2}\|  A \DaP\|_2^2 + \sum_{i\in \bP} g_i((\alphav+\Delta \alphav)_i).
  \label{eq:opt2}\tag{12}
\end{align}
Note that the modified objective \eqref{eq:opt2} does not depend on $\av_i$ for $i\notin \bP$ other than through $\vv$. Thus, \eqref{eq:opt2} can be solved locally on processing unit \B with only access to $A_{[\bP]}$ (columns $\av_i$ of A with $i\in\bP$) and the current shared state $\vv:=A\alphav$. 
Note that for quadratic functions $f$ the two problems \eqref{eq:opt1} and \eqref{eq:opt2} are equivalent. This applies to ridge regression, Lasso as well as $L_2$-regularized SVM.

For  functions $f$ where the Hessian $\nabla^2 f$ cannot be expressed as a scaled identity, \eqref{eq:opt2} forms a second-order  upper-bound on the objective \eqref{eq:opt1} by $L$-smoothness of $f$.

\vspace{0.5cm}
\begin{proposition}
The convergence results of Theorem \ref{thm:blockSCDstronglyconvex} and \ref{thm:blockSCDlipschitz} similarly hold if the update in Step 4 of Algorithm \ref{alg:SCDb} is performed on \eqref{eq:opt2} instead of \eqref{eq:opt1}, i.e., a $\theta$-approximate solution is computed on the modified objective \eqref{eq:opt2}.
\end{proposition}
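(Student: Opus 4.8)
The plan is to re-run the proof of Lemma~\ref{lemma:basic} almost verbatim, checking that the only place where the objective enters is through a single upper-bound step that the modified subproblem~\eqref{eq:opt2} still satisfies. Concretely, recall that in the proof of Lemma~\ref{lemma:basic} the approximation-quality definition is applied to get $\Delta_\bD^t \geq \theta\big[\bO(\alphav^{(t)}) - \min_{\DaP}\bO(\alphav^{(t)}+\DaP)\big]$, and then the inner minimum is lower-bounded by evaluating at the explicit feasible direction $\DaP = s(\uv^{(t)}-\alphav^{(t)})$. When the $\theta$-approximate update is instead computed on~\eqref{eq:opt2}, I would denote that modified objective by $\widetilde{\bO}^{(t)}(\DaP) := f(A\alphav^{(t)}) + \nabla f(A\alphav^{(t)})^\top A\DaP + \tfrac{L}{2}\|A\DaP\|_2^2 + \sum_{i\in\bP} g_i((\alphav^{(t)}+\DaP)_i)$, so that $\theta$-approximability now reads $\widetilde{\bO}^{(t)}(\DaP_{[\bP]}) \leq \theta\, \widetilde{\bO}^{(t)}(\DaP^\star) + (1-\theta)\widetilde{\bO}^{(t)}(\0)$, where $\DaP^\star$ minimizes $\widetilde{\bO}^{(t)}$.

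The first key step is the sandwiching inequality: by $L$-smoothness of $f$ we have $f(A(\alphav^{(t)}+\DaP)) \leq \widetilde{\bO}^{(t)}(\DaP) + \big(\text{the } g\text{ part}\big)$ reorganized so that $\bO(\alphav^{(t)}+\DaP) \leq \widetilde{\bO}^{(t)}(\DaP)$ for all $\DaP$, with equality at $\DaP=\0$ since $\widetilde{\bO}^{(t)}(\0) = f(A\alphav^{(t)}) + g_{[\bP]}(\alphav^{(t)}) \leq \bO(\alphav^{(t)})$ — and in fact the coordinates outside $\bP$ don't move, so $\widetilde{\bO}^{(t)}(\0)$ plus the frozen $g_i$ terms equals $\bO(\alphav^{(t)})$ exactly. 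Hence
\begin{align*}
\bO(\alphav^{(t)}) - \bO(\alphav^{(t+1)})
&\;\geq\; \widetilde{\bO}^{(t)}(\0) + C_{\bP^c} - \big(\widetilde{\bO}^{(t)}(\DaP_{[\bP]}) + C_{\bP^c}\big)\\
&\;\geq\; \theta\big[\widetilde{\bO}^{(t)}(\0) - \widetilde{\bO}^{(t)}(\DaP^\star)\big]
\;\geq\; \theta\big[\widetilde{\bO}^{(t)}(\0) - \widetilde{\bO}^{(t)}(s(\uv^{(t)}-\alphav^{(t)}))\big],
\end{align*}
where $C_{\bP^c} := \sum_{i\notin\bP} g_i(\alpha_i^{(t)})$ is the frozen part. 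The second step is then to observe that plugging $\DaP = s(\uv^{(t)}-\alphav^{(t)})$ into $\widetilde{\bO}^{(t)}(\0) - \widetilde{\bO}^{(t)}(\DaP)$ produces \emph{exactly} the same quantity $\Delta_f + \sum_{i\in\bP}\Delta_{g_i}$ bounded in~\eqref{eq:Df}--\eqref{eq:Dg}: indeed the linear-plus-quadratic model $-\nabla f(A\alphav^{(t)})^\top A\DaP - \tfrac{L}{2}\|A\DaP\|^2$ is precisely the right-hand side of the $L$-smoothness bound~\eqref{eq:Df} used there, so no slack is lost. Therefore the chain continues word-for-word to the conclusion $\Delta_\bD^t \geq \theta\sum_{i\in\bP}s\,\gap_i(\alpha_i^{(t)}) + \tfrac{\theta s^2}{2}[\cdots]$, and taking $\Exp_\bP$ recovers~\eqref{eq:lemB} unchanged. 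Since Theorems~\ref{thm:blockSCDstronglyconvex} and~\ref{thm:blockSCDlipschitz} are derived purely from Lemma~\ref{lemma:basic} by the choice of $s$ and the recursion, they carry over verbatim.

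The main obstacle — really the only thing that needs care — is verifying that the bookkeeping with the frozen coordinates $i\notin\bP$ is consistent: one must make sure that $\widetilde{\bO}^{(t)}(\0)$ together with the out-of-block $g_i$ terms reproduces $\bO(\alphav^{(t)})$ exactly (so the ``equality at $\DaP=\0$'' claim is legitimate and we are not secretly weakening the left-hand side), and that the actual iterate update uses the same $\DaP_{[\bP]}$ in both the true and modelled objectives. Once that is pinned down, the inequality $\bO(\alphav^{(t)}+\DaP_{[\bP]}) \leq \widetilde{\bO}^{(t)}(\DaP_{[\bP]})$ from $L$-smoothness does all the work and the rest is a transcription of the existing proof; I would simply remark in the write-up that for quadratic $f$ the two problems coincide so the statement is only non-trivial for non-quadratic $f$, where the second-order upper-bound property is exactly what is needed.
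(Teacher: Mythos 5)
Your proposal is correct and follows essentially the same route as the paper's own proof: upper-bound the true objective by the quadratic model via $L$-smoothness, apply the $\theta$-approximation definition on the modified subproblem, and evaluate at the direction $s(\uv^{(t)}-\alphav^{(t)})$, noting that the quadratic model reproduces exactly the bound \eqref{eq:Df} so that \eqref{eq:lemB} and hence both theorems carry over unchanged. Your explicit tracking of the frozen terms $C_{\bP^c}$ is in fact slightly more careful than the paper, which asserts $\tilde{\bO}(\alphav^{(t)},\vv,\0)=\bO(\alphav^{(t)})$ without spelling out the out-of-block $g_i$ contributions.
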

\begin{proof}
Let us define
\begin{eqnarray*}
\tilde \bD(\vc{\alphav}{t},\vv,\DaP)&:=&f(A\alphav)+\nabla f(A\alphav)^\top A\DaP+ \frac L {2}\|  A \DaP\|_2^2 + \sum_{i\in \bP} g_i((\alphav+\Delta \alphav)_i)
\end{eqnarray*}
Assume the update step $\Delta \alphav_{[\bP]}$ performed in Step 4 of Algorithm \ref{alg:SCDb} is a $\theta$-approximate solution to \eqref{eq:subproblem}, then we can bound the per-step improvement in any iteration $t$ as:
\begin{eqnarray}
\bD(\vc{\alphav}{t}) - \bD(\vc{\alphav}{t+1})
 & \geq& \bD(\vc{\alphav}{t}) - \tilde \bO(\alphav^{(t)},\vv,\Delta \alphav_{[\bP]}) \notag\\
 & \geq& \bD(\vc{\alphav}{t}) - \Big[\theta \min_{\sv_{[\bP]}}\tilde \bO(\alphav^{(t)},\vv,\sv_{[\bP]})+(1-\theta)\tilde \bO(\alphav^{(t)},\vv,\0)\Big] \notag\\
 &=& \theta \Big[\bD(\vc{\alphav}{t}) - \min_{\sv_{[\bP]}}\tilde \bD(\vc{\alphav}{t},\vv,\sv_{[\bP]})\Big]\notag.
\end{eqnarray}
where we used $\tilde \bD(\vc{\alphav}{t},\vv,\0) = \bD(\alphav^{(t)})$ and $\bD(\vc{\alphav}{t}+\DaP)\leq \tilde \bD(\vc{\alphav}{t},\vv,\Delta \alphav_{[\bP]})$ which follows by smoothness of $f$.
Hence, the following inequality holds for an arbitrary block update $\tilde{\sv}_{[\bP]}$:
\begin{equation}
\bD(\vc{\alphav}{t}) - \bD(\vc{\alphav}{t+1})\geq\theta  \ \left[\bD(\vc{\alphav}{t}) -\tilde \bD(\vc{\alphav}{t},\vv,\tilde\sv_{[\bP]})\right]
\label{eq:proof13}
\end{equation}

Now, if we plug in the definitions of $\bO(\alphav^{(t)})$ and $\tilde \bO(\alphav^{(t)},\vv,\tilde \sv_{{\bP}})$, then split the expression into terms involving $f$ and terms involving $g_i$ as in Section \ref{sec:lemma1} and consider the same specific update direction, (i.e. $\tilde \sv = s(\uv-\alphav)$ where $u_i\in g_i^*(-\av_i^\top \wv)$, $s\in[0,1]$), we recover  the bounds \eqref{eq:Dg} and \eqref{eq:Df} for the respective terms. If we then proceed along the lines of Section \ref{sec:proofSCD} we get exactly the same bound on the per step improvement as in \eqref{eq:lemB}. The convergence guarantees from Theorem \ref{thm:blockSCDstronglyconvex} and Theorem \ref{thm:blockSCDlipschitz} follow immediately.
\end{proof}

\newpage
\subsection{Examples}
For completeness, we state the local subproblem formulation explicitly for the objectives considered in the experiments.
\paragraph{a) Ridge regression.} The ridge regression objective is given by
\begin{equation}
\min_{\alphav\in \R^n} \ \ \frac 1 {2 d} \|A\alphav-\bv\|_2^2+\frac \lambda 2 \|\alphav\|_2^2,
\label{eq:ridgeobj}
\end{equation}

where $\bv\in\R^d$ denotes the vector of labels.  For \eqref{eq:ridgeobj} the local subproblem \eqref{eq:subproblem} can be stated as

\begin{eqnarray*}
\argmin_{\DaP\in \R^n}\;\; \frac 1 {2d} \Big\|\sum_{i\in \bP }\av_i {\Delta \alphav_{[\bP]}}_i \Big\|_2^2 +\frac 1 {d } \sum_{i\in\bP} (\vv-\bv)^\top \av_i {\Delta \alphav_{[\bP]}}_i +\frac \lambda 2 \sum_{i\in\bP} (\alphav+\DaP)_i^2.\\
\end{eqnarray*}

\paragraph{b) Lasso.} For the Lasso objective 
\begin{equation}
\min_{\alphav\in \R^n} \ \ \frac 1 {2d} \|A\alphav-\bv\|_2^2+\lambda \|\alphav\|_1,
\label{eq:lassoobj}
\end{equation}

where $\bv\in\R^d$ denotes the vector of labels, the local problem \eqref{eq:subproblem} can similarly be stated as  

\begin{eqnarray*}
\argmin_{\DaP\in \R^n}\;\; \frac 1 {2d} \Big\|\sum_{i\in \bP }\av_i {\Delta \alphav_{[\bP]}}_i \Big\|_2^2 +\frac 1 { d } \sum_{i\in\bP} (\vv-\bv)^\top \av_i {\Delta \alphav_{[\bP]}}_i +\lambda \sum_{i\in\bP} |(\alphav+\DaP)_i|.\\
\end{eqnarray*}

\paragraph{c) $L_2$-regularized SVM.} In case of the $L_2$-regularized SVM problem we consider the dual problem formulation
\begin{equation}
\label{eq:dualsvm}
\min_{\alphav\in \R^n} \  \frac 1 n  \sum_{i} (- y_i \alpha_i) + \frac 1 {2\lambda n^2 } \|A\alphav\|_2^2, 
\end{equation}

 with $y_i \alpha_i\in[0,1]$, $\forall i$, where column $\av_i$ of $A$ corresponds to sample $i$ with corresponding label $y_i$. The local subproblem \eqref{eq:subproblem} for \eqref{eq:dualsvm} can then be stated as
 
\begin{eqnarray*}
\argmin_{\DaP\in \R^n}\;\; \frac 1 n  \sum_{i\in\bP} (-y_i (\alphav+\DaP)_i) + \frac 1 {2\lambda n^2} \Big\|\sum_{i\in \bP }\av_i {\Delta \alphav_{[\bP]}}_i \Big\|_2^2 +\frac 1 {\lambda  n^2 } \sum_{i\in\bP}\vv^\top \av_i {\Delta \alphav_{[\bP]}}_i \\
\end{eqnarray*}

subject to $y_i (\alphav+ {\DaP})_i\in[0,1]$ for $ i \in \bP$.

\section{Generalization of TPA-SCD}
\label{sec:generalTPASCD}
TPA-SCD is presented in \cite{TPA17} as an efficient GPU solver for the ridge regression problem. TPA-SCD implements an asynchronous version of stochastic coordinate descent especially suited for the GPU architecture. Every coordinate is updated by a dedicated thread block and these thread blocks are scheduled for execution in parallel on the available streaming multiprocessors of the GPU. Individual coordinate updates are computed by solving for this coordinate exactly while keeping all the others fixed.  To synchronize the work between threads, the vector $\tilde \vv:=A\alphav-\bv$ is written to the GPU main memory and shared among all threads. To keep $\alphav$ and $\tilde \vv$ consistent $  \tilde\vv$ is updated asynchronously by the thread blocks after every single coordinate update to $\alphav$ exploiting the atomic add operation of modern GPUs. 

  \subsection{Elastic Net}
 The generalization of the TPA-SCD algorithm  from $L_2$ regularization  to elastic net regularized problems including Lasso is straightforward.
Let us consider the following objective:
\begin{equation}
\min_{\alphav\in \R^n} \ \ \frac 1 {2d} \|A\alphav-\bv\|_2^2+\lambda \left(\frac \eta 2 \|\alphav\|_2^2+(1-\eta)\|\alphav\|_1\right)
\label{eq:TPAobj}
\end{equation}
with trade-off parameter $\eta\in[0,1]$. 
\newline
\newline
In this case the only difference to the ridge regression solver presented in \cite{TPA17} is the computation of the individual coordinate updates in \cite[Algorithm 2]{TPA17}. That is, solving for a single coordinate $j$  exactly in \eqref{eq:TPAobj} yields the following update rule:
\begin{equation}
 \alpha_j^{t+1} =  \text{sign}(\gamma) \left[|\gamma|-\tau\right]_+
\end{equation}
with soft-thresholding parameter 
\begin{equation}
\tau =  \frac{\lambda d (1-\eta)}{\|\av_j\|_2^2+\lambda \eta d }
\label{eq:tau1}
\end{equation}
and
\begin{equation}
\gamma =  \frac{\alpha_j^t \ \|\av_j\|_2^2 -\av_j^\top\tilde \vv^t}{\|\av_j\|_2^2+\lambda \eta d}.
\label{eq:gamma1}
\end{equation}
Here $\tilde \vv^t$ denotes the current state of the shared vector $\tilde\vv^t:=A\alphav^t-\bv$ which is updated after every coordinate update as 
\[\tilde\vv^{t+1}  = \tilde\vv^t + \av_j(\alpha_j^{t+1}-\alpha_j^t).\] 
Similar to ridge regression  we parallelize the computation of  $\av_j^\top \tilde\vv^t$ and $\av_j^\top\av_j$  in \eqref{eq:tau1} and \eqref{eq:gamma1} in every iteration over all threads of the thread block in order to fully exploit the parallelism of the GPU.
  
  \subsection{$L_2$-regularized SVM}  
TPA-SCD can also be generalized to optimize the dual SVM objective \eqref{eq:dualsvm}. In the dual formulation \eqref{eq:dualsvm} a block of coordinates $\bP$ of $\alphav$ corresponds to a subset of samples (as opposed to features). Hence, individual thread blocks in TPA-SCD optimize for a single sample at a time where the share information corresponds to $\hat \vv:= A\alphav$ (instead of $A\alphav-\bv$ as in the ridge regression implementation which only impacts initialization of the shared vector).
The corresponding single coordinate update can then be computed as
\begin{equation}
\Delta \alpha_j = \frac{y_j-\frac 1 {\lambda n} \av_j^\top \hat\vv^t}{\frac 1 {\lambda n}  \|\av_j\|_2^2 }
\end{equation}
and  incorporating the constraint $(y_i \alpha_i\in[0,1]$, $\forall i)$ we find:
\[\alpha_j^{t+1} = y_j \max(0,\min(1,y_j (\alpha_j^t+\Delta \alpha_j)))\]
and update $\hat\vv$ accordingly:
\[\hat\vv^{t+1} = \hat\vv^t + \av_j ( \alpha_j^{t+1}-\alpha_j^{t}).\]
Again, multiple threads in a thread block can be used to compute individual updates by parallelizing the computation of $\av_j^\top \vv$ and $\av_j^\top\av_j$ for every update.

\section{Duality Gap}
\label{sec:gap}
The computation of the duality gap is essential for the implementation of the selection scheme in Algorithm \ref{alg:myscheme}. We therefore devote this section to explicitly state the duality gap for the objective functions considered in our experiments.
\paragraph{Ridge regression.} 
Since the $L_2$-norm is self-dual the computation of the duality gap for the ridge regression objective \eqref{eq:ridgeobj} is straightforward:
\begin{eqnarray*}
\gap(\alphav) 
&=&\frac 1 d \left[ \sum_{i\in[n]}  \alpha_i  \ \av_i^\top \wv+ \frac 1 {2\lambda d}(\av_i^\top \wv)^2 +  {\lambda d} \frac 1 2 \alpha_i^2\right]
\end{eqnarray*}
where $\wv:=A\alphav-\bv$.

\paragraph{Lasso.}
In order to compute a valid duality gap for the Lasso problem \eqref{eq:lassoobj} 
we need to employ the Lipschitzing trick as suggested in \cite{duenner16}.  This enables to compute a globally defined duality gap even for non-bounded conjugate functions $g_i^*$ such as when the $g_i$ form the $L_1$ norm. The Lipschitzing trick is applied coordinate-wise to every $g_i:=|\cdot|$. It artificially bounds the support of~$g_i$, where we choose the bound $B$ such that  $\|\alphav^{(t)}\|_1\leq B$ $\forall t>0$, and hence $|\alpha_i^t|\leq B$, $\forall i,t$. Thus every iterate $\alphav^{(t)}$ is guaranteed to lie within the support. This choice further guarantees that the bounded support  modification does not affect the optimization and the original objective is untouched inside the region of interest. For the Lasso objective \eqref{eq:lassoobj} we can satisfy this with the following choice: $B = \frac {f(0)}{\lambda d} = \frac {\|\bv\|_2^2}{2\lambda d }$. 
Given $B$, the duality gap for the Lasso problem can be computed as
\begin{eqnarray*}
\gap(\alphav) 
&=&\frac 1 d \left[ \sum_{i\in[n]} \alpha_i  \ \av_i^\top \wv+ B \left[|\av_i^\top \wv| - \lambda d \right]_+ + \lambda d |\alpha_i|\right]\\
\end{eqnarray*}
where we recall the primal-dual mapping: 
\[\wv:= A\alphav -\bv.\]

  \paragraph{$L_2$-regularized SVM.}  
The $L_2$-regularized SVM objective is given as
\begin{equation}
\label{eq:primalsvm}
\bP(\wv) =\frac 1 n  \sum_{i\in [n]} h_i(\av_i^\top \wv) + \frac \lambda 2 \|\wv\|_2^2 
\end{equation}
where for every $i \in [n]$,  $h_i(u) = \max\{0,1-y_i u\}$ denotes the hinge loss and  $\av_i$ sample $i$ with label $y_i$. The corresponding  dual problem formulation is given in \eqref{eq:dualsvm}. The duality gap \eqref{eq:gap} for the $L_2$-regularized SVM objective can be computed as follows:
\begin{eqnarray*}
\gap(\alphav) 
&=&\frac 1 n \left[ \sum_{i\in[n]} \alpha_i \av_i^\top \wv + h_i(\av_i^\top \wv)-y_i \alpha_i\right]
\end{eqnarray*}
where the primal-dual mapping is given as 
\[\wv:=\frac 1 {n \lambda } A \alphav.\]

\end{document}